%% file: main.tex
\documentclass[twoside,11pt]{article}

\usepackage{times}
\usepackage{listings}
\usepackage{lscape}
\usepackage[abbrvbib,preprint]{jmlr2e}

\input{./command}
\usepackage{graphicx}
\usepackage{subcaption}
\usepackage{tabularx}
\usepackage{amsmath}
\newcommand{\CIRCLE}{\ensuremath{\bullet}}\newcommand{\Circle}{\ensuremath{\circ}}
\newcommand{\UCB}{\mathrm{UCB}}

\usepackage{geometry}
\usepackage{lastpage}
\jmlrheading{1}{2024}{1-\pageref    {LastPage}}{4/00}{10/00}{Lanjihong Ma*, Zhen-Yu Zhang*, Masashi Sugiyama and Zhi-Hua Zhou}{}

\ShortHeadings{Selective Ensemble Based on Preference-Directed Multi-Objective Bandits}{Ma, Zhang, Sugiyama, and Zhou}
\firstpageno{1}

\begin{document}

\title{Selective Ensemble Based on Preference-Directed Multi-Objective Bandits}

\author{\name Lanjihong Ma$^*$ \email malanjihong@zjgsu.edu.cn \\
       \addr Department of Computer Science, Zhejiang Gongshang University, China
       \AND
       \name Zhen-Yu Zhang$^*$ \email zhen-yu.zhang@riken.jp \\
       \addr Center for Advanced Intelligence Project, RIKEN, Japan
       \AND
       \name Masashi Sugiyama \email sugi@k.u-tokyo.ac.jp \\
       \addr Center for Advanced Intelligence Project, RIKEN, Japan\\
       Graduate School of Frontier Sciences, The University of Tokyo, Japan
       \AND
       \name Zhi-Hua Zhou \email zhouzh@lamda.nju.edu.cn \\
       \addr National Key Laboratory for Novel Software Technology, Nanjing University, China\\
       School of Artificial Intelligence, Nanjing University, China}

\footnotetext[1]{$^*$Lanjihong Ma and Zhen-Yu Zhang contributed equally to this work.}

\editor{}

\maketitle

\begin{abstract}
Selective ensemble for modern machine learning systems requires choosing promising model candidates under limited evaluation budgets, while downstream tasks often specify only partial preferences over capabilities such as accuracy, robustness, and reasoning. This setting naturally gives rise to a sequential decision problem under partially specified linear preferences. We formalize it as \emph{preference-directed multi-objective bandits}~{(PDMOB)}, where admissible trade-offs are represented by a polyhedral preference cone. Based on this formulation, we introduce \emph{Pareto $C$-optimality}, which recovers standard Pareto optimality and single-weight scalarization as special cases. We then propose the \emph{preference-directed upper confidence bound}~{(PrefUCB)} algorithm, which maintains directional confidence intervals to guide exploration. We analyze both indicator-based and gap-weighted regret, and establish instance-dependent logarithmic bounds for both criteria, recovering the optimal logarithmic dependence on the horizon $T$ in classical special cases. Experiments on large pre-trained model selective ensemble tasks and online asset allocation under institutional mandates validate the efficacy of our method.
\end{abstract}

\begin{keywords}
  multi-objective bandits, ensemble, Pareto optimality
\end{keywords}

\input{./sections/introduction.tex}
\input{./sections/formulation.tex}
\input{./sections/approach.tex}
\input{./sections/experiments.tex}
\input{./sections/related.tex}

\section{Conclusion}
\label{sec:conclusion}

We formulate the preference-directed multi-objective bandit problem by modeling partially specified linear preferences via a polyhedral cone \(C\), and introduce Pareto \(C\)-optimality, which unifies standard Pareto optimality and scalarization as special cases. We propose PrefUCB, an optimistic algorithm that maintains directional confidence bounds aligned with the preference cone, and establish instance-dependent logarithmic regret bounds under both indicator-based and gap-weighted criteria, recovering the optimal \(\log T\) rate in classical settings. Experiments on selective ensemble and portfolio allocation further demonstrate the effectiveness of preference-directed exploration. While these results capture the essential difficulty of the problem, our analysis incurs a linear dependence on the number of preference directions \(M\) due to a union bound across directions; this dependence may be loose when the directions are highly correlated, where the effective dimensionality of the cone is smaller than \(M\). Obtaining a tighter characterization that replaces \(M\) with a geometry-dependent quantity, together with matching lower bounds, remains open.

\bibliography{./ref.bib}



\newpage

\appendix
\input{./sections/appendix.tex}



\end{document}

%% file: command.tex
\usepackage{amsmath}
\usepackage{amssymb}
\usepackage{mathtools}
\usepackage{dsfont}
\usepackage{lipsum}
\usepackage{bm,bbm}
\usepackage{bbding}
\usepackage{paralist}
\usepackage{xspace}
\usepackage[utf8]{inputenc} 
\usepackage[T1]{fontenc}    
\usepackage{hyperref}       
\usepackage{url}            
\usepackage{amsfonts}       
\usepackage{nicefrac}       
\usepackage{microtype} 
\usepackage{multirow}
\usepackage{graphicx}
\usepackage{ragged2e}
\usepackage[most]{tcolorbox}
\definecolor{darkerlogocolor}{RGB}{20, 0, 145}
\newtcolorbox{ttcolorbox}[1][]{colframe=darkerlogocolor, colback=darkerlogocolor!4!white, title=#1}
\usepackage[font=small,labelfont=bf]{caption}
\usepackage{subfloat}
\usepackage{float}
\usepackage{algorithm,algorithmic}
\usepackage{wrapfig}
\usepackage{booktabs}
\usepackage{verbatim}
\usepackage{setspace}
\usepackage{tikz}
\usepackage{tcolorbox}
\definecolor{mydarkblue}{rgb}{0,0.08,0.45}
\hypersetup{
    colorlinks,
    breaklinks,
    linkcolor = mydarkblue,
    citecolor = mydarkblue,
    urlcolor  = black,
} 

\renewcommand{\algorithmicrequire}{ \textbf{Input:}}      
\renewcommand{\algorithmicensure}{ \textbf{Output:}}     

\renewcommand{\hat}{\widehat}

\usepackage{mathtools}
\usepackage{appendix}

\newtheorem{myThm}{Theorem}

\newtheorem{myLemma}{Lemma}

\newtheorem{myCor}{Corollary}

\newtheorem{myAssum}{Assumption}

\newtheorem{myDef}{Definition}

\newtheorem{myRemark}{Remark}

\renewcommand{\hat}{\widehat}

\def \epsilon {\varepsilon}

\definecolor{DSgray}{cmyk}{0,1,0,0}

\usepackage[textsize=tiny]{todonotes}
\usepackage{prettyref}

\newcommand{\savehyperref}[2]{\texorpdfstring{\hyperref[#1]{#2}}{#2}}
\newrefformat{eq}{\savehyperref{#1}{Eq.~(\ref*{#1})}}
\newrefformat{eqn}{\savehyperref{#1}{Equation~\ref*{#1}}}
\newrefformat{lem}{\savehyperref{#1}{Lemma~\ref*{#1}}}
\newrefformat{lemma}{\savehyperref{#1}{Lemma~\ref*{#1}}}
\newrefformat{def}{\savehyperref{#1}{Definition~\ref*{#1}}}
\newrefformat{line}{\savehyperref{#1}{Line~\ref*{#1}}}
\newrefformat{thm}{\savehyperref{#1}{Theorem~\ref*{#1}}}
\newrefformat{corr}{\savehyperref{#1}{Corollary~\ref*{#1}}}
\newrefformat{cor}{\savehyperref{#1}{Corollary~\ref*{#1}}}
\newrefformat{sec}{\savehyperref{#1}{Section~\ref*{#1}}}
\newrefformat{app}{\savehyperref{#1}{Appendix~\ref*{#1}}}
\newrefformat{appendix}{\savehyperref{#1}{Appendix~\ref*{#1}}}
\newrefformat{assum}{\savehyperref{#1}{Assumption~\ref*{#1}}}
\newrefformat{ex}{\savehyperref{#1}{Example~\ref*{#1}}}
\newrefformat{fig}{\savehyperref{#1}{Figure~\ref*{#1}}}
\newrefformat{alg}{\savehyperref{#1}{Algorithm~\ref*{#1}}}
\newrefformat{rem}{\savehyperref{#1}{Remark~\ref*{#1}}}
\newrefformat{conj}{\savehyperref{#1}{Conjecture~\ref*{#1}}}
\newrefformat{prop}{\savehyperref{#1}{Proposition~\ref*{#1}}}
\newrefformat{proto}{\savehyperref{#1}{Protocol~\ref*{#1}}}
\newrefformat{prob}{\savehyperref{#1}{Problem~\ref*{#1}}}
\newrefformat{claim}{\savehyperref{#1}{Claim~\ref*{#1}}}
\newrefformat{que}{\savehyperref{#1}{Question~\ref*{#1}}}
\newrefformat{op}{\savehyperref{#1}{Open Problem~\ref*{#1}}}
\newrefformat{fn}{\savehyperref{#1}{Footnote~\ref*{#1}}}
\allowdisplaybreaks

%% file: sections/introduction.tex

\section{Introduction}
\label{sec:introduction}

Machine learning models have achieved remarkable success across a wide range of real-world tasks~\citep{Goodfellow-et-al-2016,brown2020language,zhou2021machine}, yet their performance differs across abilities such as predictive accuracy, worst-case robustness, reasoning, and memorization~\citep{roughgarden2021beyond,jelassi2024mixture}. The choice of model therefore depends on which abilities are most relevant to the downstream application. For example, in safety-critical applications such as autonomous driving and medical diagnosis, both predictive accuracy and worst-case robustness should be considered, as a single failure under extreme conditions could lead to catastrophic consequences~\citep{wang2021advsim,robey2022probabilistically}.

\begin{figure*}[!t]
\centering
\begin{minipage}{.98\linewidth}
\centering
\includegraphics[width=0.98\textwidth]{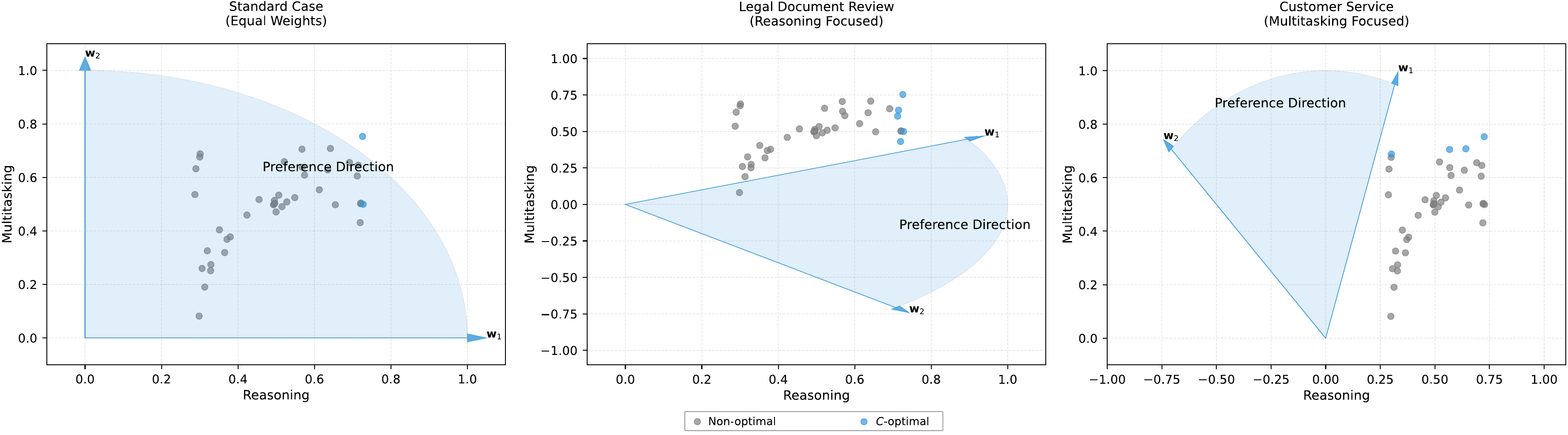}
\end{minipage}
\caption{\textbf{Visualization of preference-directed Pareto $C$-optimality in three cases using LLM performance data.} Each point is an LLM's normalized reasoning (x-axis) and multitasking (y-axis) score; negative values indicate relaxable trade-offs. The blue region shows the preference cone $C$, defining admissible trade-offs between objectives. Gray and blue points denote non-optimal and Pareto $C$-optimal LLMs, highlighting how $C$ generalizes standard Pareto optimality by incorporating user-specified trade-off structures.}
\label{fig:pd-ill}
\vspace{-2mm}
\end{figure*}

The diverse capabilities of machine learning models imply that \emph{users may prioritize different model capabilities for different downstream tasks}. Crucially, such preferences are often \emph{partially specified}: users can usually describe acceptable trade-offs, such as giving up some multi-tasking ability for stronger reasoning in legal document review tasks~\citep{DBLP:conf/bigdataconf/KeelingGGHZ22}, without specifying a precise scalar utility function. This form of preference falls into an important but overlooked gap between standard Pareto optimality~\citep{censor1977pareto}, which treats all objectives as equally non-substitutable, and scalarization~\citep{miettinen1999nonlinear}, which requires exact weights over capabilities that users often cannot provide in practice.

A natural way to address such preferences is the \emph{selective ensemble} paradigm~\citep{DBLP:journals/ai/ZhouWT02,zhou2012ensemble}, which selects from a pool of base models to explore diverse capability profiles without retraining for each task. Since evaluating every base model on the full downstream dataset is computationally expensive, we cast the selection process as a \emph{multi-objective bandit}~{(MOB)} problem~\citep{DBLP:conf/ijcnn/DruganN13}, in which only the selected models are queried on a subset of the data and suitable candidates are identified through exploration-exploitation trade-offs. However, existing MOB algorithms fall into two extremes: Pareto-based methods do not exploit user input on acceptable trade-offs, while scalarized methods require a fully specified utility that users typically cannot provide. As a result, both approaches fail to capture the partial preferences that arise in practice.

In this paper, we formulate selective ensemble with partial preferences as the \emph{preference-directed multi-objective bandit}~(PDMOB) problem. Specifically, we formalize the preference direction as a polyhedral set that represents trade-offs between different model capabilities. Based on this formalization, we extend the standard Pareto optimality to the \emph{preference-directed Pareto optimality}, which generalizes the traditional Pareto optimality by incorporating directional trade-offs between objectives. As shown in Figure~\ref{fig:pd-ill}, standard Pareto optimality uses a 90-degree polyhedral cone that treats both objectives equally, whereas our preference-directed formulation allows varying angular regions that reflect relative importance and acceptable trade-offs across scenarios.

Based on the extended Pareto optimality, the PDMOB problem involves a learner in sequentially evaluating models on subsets of the dataset while facing two central challenges: meeting the extended optimality condition that requires joint evaluation across preference directions, and managing a modified exploration-exploitation trade-off to account for directional uncertainties. To handle these challenges, we propose a novel \emph{Preference-directed Upper Confidence Bound}~(PrefUCB) algorithm, which maintains directional confidence intervals and balances exploration and exploitation along preference directions. Theoretically, we prove that PrefUCB achieves an optimal instance-dependent regret bound that matches the lower bounds of classical multi-armed bandits and standard multi-objective bandits in their respective special cases. Experiments on model selection tasks validate the efficacy of our approach. We summarize the main contributions as follows:
\begin{compactitem}[$\bullet$]
	\item We formulate the \emph{preference-directed multi-objective bandit}~(PDMOB) problem, a stochastic bandit framework in which partially specified preferences are encoded as a polyhedral cone $C$, and introduce the notion of Pareto $C$-optimality, which subsumes standard Pareto optimality and full scalarization as special cases.
	\item We propose \emph{PrefUCB}, an optimistic algorithm for learning under cone-induced preferences, and establish instance-dependent logarithmic regret bounds under two complementary criteria, recovering the classical optimal rates in the Pareto and scalarization limits.
    \item We test PrefUCB on several selective ensemble tasks, showing that preference-directed exploration more efficiently identifies candidates aligned with task-specific trade-offs.
\end{compactitem}

%% file: sections/formulation.tex

\section{Problem Formulation}
\label{sec:problem-formulation}

In this section, we first introduce preference directions and Pareto \(C\)-optimality, then formalize the PDMOB problem with its performance measures, and finally discuss the core challenges.

\subsection{Preference Direction and Pareto \(C\)-Optimality}

In this part, we first introduce the definitions of preference directions and Pareto \(C\)-optimality.

\begin{myDef}[Preference Direction]
\label{def:PD}
Given $\mathbf{W} \in \mathbb{R}^{M \times d}$, where each row encodes a linear preference over the $d$ sub-objectives, the preference direction is the closed polyhedral cone
\[ C := \{ \mathbf{x} \in \mathbb{R}^d \mid \mathbf{W} \mathbf{x} \geq \bm{0} \}. \]
\end{myDef}

\begin{myDef}[$C$-Dominance and Pareto $C$-Optimality]
\label{def:C-dom}
For $\bm{x}, \bm{y} \in \mathbb{R}^d$, we say $\bm{x}$ $C$-dominates $\bm{y}$, written $\bm{x} \succeq_C \bm{y}$, if $\bm{x} - \bm{y} \in C$. For actions $i,j$, we abbreviate $\bm{r}(i)\succeq_C\bm{r}(j)$ as $i\succeq_C j$. An action is \emph{Pareto $C$-optimal} if no other action $C$-dominates it, and we let
\[ P^*_{C} := \{ i \in [K] \mid \nexists\, j \in [K]\setminus\{i\}:\ j \succeq_{C} i \}, \]
where $[K]:=\{1,\dots,K\}$. When \(C\) is not pointed, \(\succeq_C\) is a cone-induced preorder.
\end{myDef}

The following example shows how different choices of \(C\) recover standard multi-objective formulations and express partial trade-off information.

\begin{example}
\label{ex:special-cases}
The preference cone $C$ generalizes classical settings~\citep{censor1977pareto}:
(i) taking $\mathbf{W}=\mathbf{I}_d$ gives $C=\mathbb{R}_+^d$, recovering standard Pareto optimality;
(ii) a single row $\mathbf{W}=\mathbf{w}^\top$ ($M=1$) reduces $C$ to a half-space, recovering linear scalarization;
(iii) with $M>1$ general rows, $C$ is the intersection of half-spaces, modeling partially specified linear preferences built from user trade-offs (e.g., ``favor sub-objective $1$ over $2$'' corresponds to a row $[1,-1,0,\dots,0]$).
\end{example}

Before stating the learning objective and regret bounds, we introduce normalized projections along each preference direction and the corresponding preference gaps.

\begin{myDef}[Normalized Preference Projections]
\label{def:norm-proj}
For direction $k\in[M]$, the normalized projection of $\bm{r}\in\mathbb{R}^d$ is
\[
S_k(\bm{r}) := \frac{\mathbf{w}_k^\top\bm{r} - L_k}{U_k - L_k},
\quad
L_k \!=\! \sum_{i}\min(0,\mathbf{w}_{k,i}),\ U_k \!=\! \sum_{i}\max(0,\mathbf{w}_{k,i}),
\]
so that $S_k(\bm{r})\in[0,1]$ whenever $\bm{r}\in[0,1]^d$.
\end{myDef}

\begin{myAssum}[Strict Separability]
\label{assump:separability}
Every non-Pareto-$C$-optimal action $a\notin P_C^*$ admits some Pareto-$C$-optimal dominator $a^*\in\mathcal D(a)$ with
\[
\min_{k\in[M]} \mathbf w_k^\top \bigl[\bm r(a^*)-\bm r(a)\bigr] > 0.
\]
\end{myAssum}

\begin{myRemark}
Assumption~\ref{assump:separability} plays the same role as the positive-gap condition in classical stochastic bandits: it rules out weakly dominated actions, for which instance-dependent logarithmic regret bounds in $1/\Delta_a$ are generally unattainable.
\end{myRemark}

\begin{myDef}[Preference Gaps]
\label{def:gaps}
Let $\mathcal{D}(a) = \{a^*\in P_C^* : a^* \succeq_C a,\ a^*\neq a\}$. For $a\notin P_C^*$, define
\[
\Delta_a := \max_{a^*\in\mathcal{D}(a)} \min_{k\in[M]}
\frac{\mathbf{w}_k^\top[\bm{r}(a^*) - \bm{r}(a)]}{U_k - L_k} > 0,
\]
and set $\Delta_a:= 0$ for $a\in P_C^*$.
\end{myDef}

\subsection{Preference-Directed Multi-Objective Bandits}

In this part, we formalize the PDMOB problem induced by the preference cone \(C\). We consider an action set $\mathcal A := [K]$, where each $a\in[K]$ has an unknown reward vector $\bm r(a)\in[0,1]^d$. At round $t\in[T]$, the learner selects $a_t$ and observes a noisy reward $\bm y_t\in[0,1]^d$ with $\mathbb E[\bm y_t\mid a_t]=\bm r(a_t)$. Since Pareto $C$-optimality of $a_t$ is not directly observable, the goal is to play Pareto $C$-optimal actions as often as possible.

We evaluate performance via two complementary measures: the \emph{indicator regret}, capturing \emph{whether} a chosen action is Pareto $C$-optimal, and the \emph{gap-weighted regret}, capturing \emph{how severely} a suboptimal choice violates preference-directed optimality.

\begin{figure*}[!t]
\begin{minipage}{.98\linewidth}
\begin{algorithm}[H]
\begin{flushleft}
At each $t\in[T]$:
\begin{algorithmic}[1]
\STATE The learner selects $a_t\in[K]$ and incurs unobserved regrets $\ell_t^{01}(a_t)=\mathbb{I}(a_t\notin P_C^*)$ and $\ell_t^{\mathrm{gap}}(a_t)=\Delta_{a_t}\mathbb{I}(a_t\notin P_C^*)$;
\STATE The learner observes $\bm y_t\in[0,1]^d$ with $\mathbb E[\bm y_t]=\bm r(a_t)$.
\end{algorithmic}
\textbf{Objective.} Minimize $\mathcal{R}_T^{01}=\sum_{t=1}^T \mathbb{I}(a_t\notin P_C^*)$ and $\mathcal{R}_T^{\mathrm{gap}}=\sum_{t=1}^T \Delta_{a_t}\mathbb{I}(a_t\notin P_C^*)$.
\end{flushleft}
\end{algorithm}
\end{minipage}
\caption{The Preference-Directed Multi-Objective Bandits (PDMOB) problem.}
\label{fig:learning_procedure}
\end{figure*}

\begin{myDef}[Cumulative Regrets]
\label{def:regrets}
Over $T$ rounds, the cumulative \emph{indicator} and \emph{gap-weighted} regrets are
\[
\mathcal{R}_T^{01} := \sum_{t=1}^T \mathbb{I}(a_t\notin P_C^*),
\qquad
\mathcal{R}_T^{\mathrm{gap}} := \sum_{t=1}^T \Delta_{a_t}\,\mathbb{I}(a_t\notin P_C^*).
\]
\end{myDef}

Together, these definitions yield the PDMOB protocol summarized in Figure~\ref{fig:learning_procedure}.

\subsection{Core Challenges}

In this part, we discuss the core challenges of the PDMOB problem, focusing on two structural difficulties that do not arise in standard Pareto- or scalarization-based bandits.

\begin{compactitem}[$\bullet$]
\item \textbf{Joint confidence across directions.} Standard Pareto dominance is verified coordinate-wise, so per-objective estimation errors decouple. Pareto $C$-optimality requires all $M$ polyhedral constraints to hold \emph{jointly}: an error along any single direction can misclassify a suboptimal arm, forcing simultaneous confidence control over all directions.

\item \textbf{Directional, arm-dependent gaps.} Unlike fixed coordinate-wise gaps in MOB, the effective gap $\Delta_a$ depends on both the chosen dominator $a^*$ and the tightest separating direction. Different suboptimal arms are hardest to distinguish along different directions, so uniform exploration across coordinates or arms cannot adapt to this geometry.
\end{compactitem}

%% file: sections/approach.tex

\section{Algorithm and Theory}
\label{sec:algorithm-and-theory}

In this section, we first give a directional characterization of \(C\)-dominance, turning the cone-based preference relation to scalar comparisons. This characterization yields structural requirements that guide the design of an optimistic algorithm for PDMOB. We then state the regret guarantee of the proposed algorithm.

\subsection{A Directional Characterization of \texorpdfstring{$C$}{C}-Dominance}
\label{sec:decomposition}

By Definition~\ref{def:C-dom}, action \(i\) \(C\)-dominates action \(j\), written \(i \succeq_C j\), if and only if
\[
\bm r(i)-\bm r(j)\in C.
\]
Since \(C\) is defined by the preference direction matrix $\mathbf W = [\mathbf w_1^\top,\mathbf w_2^\top,\ldots,\mathbf w_M^\top] \in \mathbb R^{M\times d}$, this is equivalent to $\mathbf W(\bm r(i)-\bm r(j)) \ge \bm 0$. Using the normalized directional projections \(S_k(\cdot)\) from Definition~\ref{def:norm-proj}, which are positive affine transformations of \(\mathbf w_k^\top \bm r\), we get the equivalent form

\begin{equation}
\label{eq:C-domination}
i \succeq_C j
\iff
\min_{k\in[M]}
\bigl[
S_k(\bm r(i)) - S_k(\bm r(j))
\bigr]
\ge 0.
\end{equation}

Eq.~\eqref{eq:C-domination} shows that \(C\)-dominance can be verified via \(M\) scalar comparisons, avoiding explicit cone geometry. 
When the reward vectors are known, Pareto-\(C\)-optimality can be checked pairwise in \(\mathcal{O}(MK^2)\) time for \(K := |\mathcal{A}|\) actions. 
More importantly, Eq.~\eqref{eq:C-domination} suggests that a sequential algorithm for the PDMOB problem should estimate rewards along preference directions and aggregate them through a worst-case criterion.

\begin{algorithm}[t]
\caption{Preference-Directed Upper Confidence Bound (PrefUCB)}
\label{alg:prefucb}
\begin{algorithmic}[1]
\STATE \textbf{Input:} action set \(\mathcal A\), preference direction matrix \(\mathbf W\), horizon \(T\)
\STATE \textbf{Initialization:} pull each action \(a\in\mathcal A\) once; set \(n(a)\gets 1\) and initialize \(\hat{\bm r}(a)\) using the observed reward vector
\FOR{\(t = |\mathcal A|+1, |\mathcal A|+2, \ldots, T\)}
    \FOR{each action \(a\in\mathcal A\)}
        \FOR{each direction \(k\in[M]\)}
            \STATE Compute \(\hat S_k(a)\) by~\eqref{eq:empirical-proj}
            \STATE Compute \(U_{a,k}(t)\) by~\eqref{eq:dir-ucb}
        \ENDFOR
        \STATE Set \(U_a(t)\gets \min_{k\in[M]} U_{a,k}(t)\)
    \ENDFOR
    \STATE Select \(a_t \in \arg\max_{a\in\mathcal A} U_a(t)\)
    \STATE Observe feedback \(\bm y_t(a_t)\in[0,1]^d\)
    \STATE Update
    \[
    n(a_t)\gets n(a_t)+1,
    \qquad
    \hat{\bm r}(a_t)
    \gets
    \frac{(n(a_t)-1)\hat{\bm r}(a_t)+\bm y_t(a_t)}{n(a_t)}
    \]
\ENDFOR
\end{algorithmic}
\end{algorithm}

\subsection{Preference-Directed Upper Confidence Bound}
\label{sec:method}

In this part, we introduce our proposed algorithm. The characterization in Eq.~\eqref{eq:C-domination} leads to three requirements on the optimistic algorithm for the PDMOB problem:
\begin{compactitem}[$\bullet$]
    \item \textbf{Directional estimation.} \(C\)-dominance is checked along directions, not coordinates. So confidence must be tracked for each \(\mathbf w_k^\top\bm r(a)\), not for each coordinate of \(\bm r(a)\). Tracking coordinates wastes effort on directions orthogonal to \(C\).
    \item \textbf{Worst-direction aggregation.} Pareto-\(C\)-optimality requires an action to compete on \emph{all} directions at once (the \(\min_k\) in Eq.~\eqref{eq:C-domination}). A worst-direction score gives a conservative single-index certificate aligned with this component-wise dominance structure. Averages or sums may accept actions that are strong on some directions but fail on others.
    \item \textbf{KL-sharp confidence.} The projections \(S_k(\bm r(a))\) yield bounded \([0,1]\) feedback, so each direction is a scalar bounded-mean problem, where KL-type bounds are instance-optimal.
\end{compactitem}
These requirements motivate the \emph{Preference-Directed Upper Confidence Bound}~{(PrefUCB)} algorithm, whose main components are summarized in Algorithm~\ref{alg:prefucb}.

\paragraph{Directional empirical rewards.}
Let \(\hat{\bm r}(a)\) be the empirical mean reward vector of action \(a\) and let \(n(a)\) be its pull count. For direction \(k\in[M]\), the directional empirical reward is
\begin{equation}
\label{eq:empirical-proj}
\hat S_k(a)
=
S_k(\hat{\bm r}(a))
=
\frac{\mathbf w_k^\top \hat{\bm r}(a)-L_k}{U_k-L_k},
\end{equation}
with \(L_k\) and \(U_k\) from Definition~\ref{def:norm-proj}. Since \(S_k(\cdot)\) is affine, \(S_k(\bm y_t(a))\) stays in \([0,1]\) and
\[
\mathbb E\!\left[S_k(\bm y_t(a))\right] = S_k(\bm r(a)).
\]
So for each direction, estimating \(S_k(\bm r(a))\) is a scalar bounded-mean problem.

\paragraph{Directional confidence index.}
Let
\[
\mathrm{KL}(x,q)
=
x\log\frac{x}{q} + (1-x)\log\frac{1-x}{1-q}
\]
be the Bernoulli KL divergence. Following the instance-optimal bound for bounded scalar means~\citep{garivier2011kl}, we set the directional confidence index for each action \(a\) and direction \(k\) at round \(t\) as
\begin{equation}
\label{eq:dir-ucb}
U_{a,k}(t) = \sup\left\{ q\in[\hat S_k(a),1]: n(a) \mathrm{KL} \bigl(\hat S_k(a),q\bigr)\le \beta_t \right\},
\end{equation}
where $\beta_t = \log t$. This is the KL-UCB confidence index applied to the scalar directional feedback \(S_k(\bm y_t(a))\). Since \(S_k(\bm y_t(a))\in[0,1]\), each preference direction becomes a one-dimensional bounded-mean bandit problem; the Bernoulli KL index above is the standard KL-UCB confidence index for bounded rewards and provides an optimistic upper confidence bound for \(S_k(\bm r(a))\).

\paragraph{Worst-direction aggregation as a $C$-dominance certificate.}
The directional indices are combined into a single score
\begin{equation*}
U_a(t) = \min_{k\in[M]} U_{a,k}(t).
\end{equation*}
The minimum is not the only possible way to use directional confidence bounds; one could also design pairwise dominance tests or maintain an optimistic estimate of the full Pareto-\(C\) frontier. We use the worst-direction score because it is a simple single-index certificate that is monotone with respect to componentwise directional comparisons: if an arm is uniformly below a Pareto-\(C\)-optimal dominator in all preference directions, then its minimum directional index is also below that dominator's minimum directional index. This property is exactly what the elimination argument in Appendix~\ref{app:theorem} exploits.

Finally, PrefUCB picks the action with the largest score:
\begin{equation}
\label{eq:selection-rule}
a_t \in \arg\max_{a\in\mathcal A} U_a(t).
\end{equation}
This rule is a selection policy rather than a Pareto-\(C\) frontier identification procedure: PrefUCB only needs to avoid non-Pareto-\(C\)-optimal actions under the regret criteria, not to enumerate all actions in \(P_C^\ast\).

The same structure explains why two natural baselines fall short. A coordinate-wise Pareto-UCB tracks confidence on each objective dimension \(i\in[d]\) and checks pairwise Pareto dominance. This spends effort on directions orthogonal to \(C\) and gives a rate that scales with \(d\) rather than \(M\), which is loose when \(M\ll d\). A scalarized UCB goes the other way and collapses the reward onto a single \(\mathbf w^\top\bm r\). It recovers only one face of the Pareto-\(C\)-optimal set and cannot check \(C\)-dominance on the other \(M-1\) directions, so it is inconsistent with Definition~\ref{def:C-dom} whenever \(M>1\). PrefUCB sits between these two extremes: directional enough to respect \(C\), and aggregated tightly enough to scale with \(M\) rather than \(d\).

For fixed \(a\) and \(k\), the set in Eq.~\eqref{eq:dir-ucb} is an interval because \(q\mapsto \mathrm{KL}(\hat S_k(a),q)\) is continuous and strictly increasing on \([\hat S_k(a),1]\). So each \(U_{a,k}(t)\) can be computed by one-dimensional binary search. The per-round cost is linear in the number of actions and directions, up to the precision of the root-finding routine.

\subsection{Theoretical Results}
\label{sec:theory}

In this part, we show the instance-dependent regret guarantee for the PrefUCB algorithm. 

\begin{myThm}
\label{thm:main}
Under Assumption~\ref{assump:separability}, for a PDMOB problem with preference direction matrix \(\mathbf W\in\mathbb R^{M\times d}\), Algorithm~\ref{alg:prefucb} satisfies
\[
\mathbb E\!\left[R_T^{01}\right]
=
\mathcal O\!\left(
\sum_{a\notin P_C^*}
\left(
\frac{M\log T}{\Delta_a^2}+1
\right)
\right),
\]
where \(\Delta_a\) is the preference gap from Definition~\ref{def:gaps}. The full proof is in Appendix~\ref{app:theorem}.
\end{myThm}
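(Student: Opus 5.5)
The plan is the standard KL-UCB decomposition, adapted so that the index of a suboptimal arm \(a\) is compared against the index of a fixed Pareto-\(C\)-optimal dominator. Fix \(a\notin P_C^*\) and let \(a^*\in\mathcal D(a)\) attain the maximum in Definition~\ref{def:gaps}. Under Assumption~\ref{assump:separability} this gives \(S_k(\bm r(a^*))-S_k(\bm r(a))\ge \Delta_a>0\) for every \(k\in[M]\). Taking minima over \(k\) then yields
\[
\mu^*:=\min_k S_k(\bm r(a^*)) \;\ge\; \min_k S_k(\bm r(a))+\Delta_a=:\mu_a+\Delta_a .
\]
If \(a_t=a\), then \(U_a(t)\ge U_{a^*}(t)\). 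So either (i) \(U_{a^*}(t)<\mu^*\), or (ii) \(U_a(t)\ge \mu^*\). This gives
\[
\mathbb E[n_T(a)] \le 1+\sum_{t}\Pr\bigl(U_{a^*}(t)<\mu^*\bigr)+\mathbb E\sum_t \mathbb I\bigl(a_t=a,\ U_a(t)\ge \mu_a+\Delta_a\bigr),
\]
and summing over \(a\notin P_C^*\) bounds \(\mathbb E[R_T^{01}]=\sum_{a\notin P_C^*}\mathbb E[n_T(a)]\).

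\textbf{Step 1 (optimistic arm).} Event (i) requires \(U_{a^*,k}(t)<\mu^*\le S_k(\bm r(a^*))\) for some \(k\), since the minimum of the indices lies below \(\mu^*\) only if some coordinate does. So the true directional mean \(S_k(\bm r(a^*))\) exceeds its KL-UCB index for that \(k\). Each direction is a scalar \([0,1]\)-valued sequence \(S_k(\bm y_s(a^*))\). I would therefore invoke the KL-UCB self-normalized deviation bound of \citet{garivier2011kl} for each \(k\), which with a slightly inflated threshold \(\beta_t=\log t+c\log\log t\) gives \(\sum_t\Pr(\cdot)=O(\log\log T)\), and then union bound over the \(M\) directions. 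With the paper's \(\beta_t=\log t\) exactly, I would instead use the peeling bound \(\Pr(\exists s\le t: sKL(\hat S_{k,s},S_k)>\log t)\le e\lceil \log t\cdot\log t\rceil t^{-1}\). This is the delicate constant-level step, and it is where an \(M\)-fold factor enters. At worst it contributes a term of order \(M\log\log T\), which is dominated by \(M\log T/\Delta_a^2\).

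\textbf{Step 2 (suboptimal arm).} Event (ii) implies \(U_{a,k}(t)\ge\mu_a+\Delta_a\) for \emph{every} \(k\). In particular this holds for the direction \(k_a\) attaining \(\mu_a=S_{k_a}(\bm r(a))\), so for that single direction the index overshoots its true mean by at least \(\Delta_a\). By Pinsker, \(\mathrm{KL}(x,q)\ge 2(q-x)^2\). Hence once \(n(a)\ge \ell_a:=\lceil \log T/(2(\Delta_a/2)^2)\rceil\), the overshoot forces \(\hat S_{k_a}(a)\ge S_{k_a}(\bm r(a))+\Delta_a/2\). By Hoeffding, the expected number of such deviation times is \(\sum_n e^{-n\Delta_a^2/2}=O(1/\Delta_a^2)\). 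This yields \(O(\log T/\Delta_a^2+1/\Delta_a^2)\).

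The claimed factor \(M\) then arises from the union bounds over directions. In Step 2 I would simply union over all \(k\) rather than isolating \(k_a\), which keeps the argument robust and matches the stated \(M\log T/\Delta_a^2\). Summing gives \(\mathbb E[n_T(a)]=O(M\log T/\Delta_a^2+1)\). The \(1/\Delta_a^2\) constant from Step 2 is absorbed up to the \(O\), noting \(\log T\ge1\).

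The main obstacle is Step 1: controlling the probability that the optimistic index of the dominator \(a^*\) falls below its worst-direction value. This needs a time-uniform deviation inequality for the KL index with the bare \(\beta_t=\log t\). I would first try the Garivier–Cappé peeling inequality per direction and accept the resulting lower-order \(\log\log T\) or constant terms inside the \(O(\cdot)\).
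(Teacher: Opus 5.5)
Your architecture matches the paper's. You fix a dominator $a^*$ attaining $\Delta_a$, and charge each pull of $a$ beyond the first $\lceil 2\log T/\Delta_a^2\rceil$ either to under-coverage of $a^*$ in some direction (union bound over $M$) or to an overshoot of $a$'s empirical directional mean by $\Delta_a/2$, counted by sample index with Hoeffding. Your Step~2 is fine. Isolating the minimizing direction $k_a$ is even slightly tighter than the paper's event (G1), which needs all $k$.

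The gap is Step~1. With the algorithm's $\beta_t=\log t$, the peeling bound you quote gives $e\lceil\log^2 t\rceil/t$ per round, and $\sum_{t\le T} e\lceil\log^2 t\rceil/t$ is of order $\log^3 T$, not $\log\log T$. The $\log\log T$ rate holds only for $\beta_t=\log t+3\log\log t$, where $e^{-\beta_t}=1/(t\log^3 t)$ absorbs the $\log^2 t$ factor from peeling. So Step~1 as written contributes $O(M\log^3 T)$, which is not $O(M\log T/\Delta_a^2)$ as $T\to\infty$. Inflating $\beta_t$ instead proves the bound for a different algorithm than Algorithm~\ref{alg:prefucb}. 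At this point the paper simply invokes Lemma~\ref{lem:ucb-coverage}, which asserts $\sum_t\Pr[\mathrm{UCB}_k(a,t)<S_k(a)]\le C_{\mathrm{kl}}(1+\log T)$. However, the Garivier--Capp\'e inequality with $\delta=\log t$ yields the same $\log^3 T$ sum, so citing it would not close your gap.

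A self-contained repair that keeps $\beta_t=\log t$ is to give the dominator part of the margin. If $a_t=a$, then either $U_{a^*}(t)<\mu^*-\Delta_a/2$, or $U_a(t)\ge\mu^*-\Delta_a/2\ge\mu_a+\Delta_a/2$.

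\textbf{Second case.} This is your Step~2 with $\Delta_a/2$ in place of $\Delta_a$: threshold $\lceil 8\log T/\Delta_a^2\rceil$, overshoot $\Delta_a/4$.

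\textbf{First case.} Some $k$ has $U_{a^*,k}(t)<p-\Delta_a/2$, where $p:=S_k(a^*)$. Then $q=p-\Delta_a/2$ is infeasible in~\eqref{eq:dir-ucb}, so with $n=n_t(a^*)$ we get $\hat S_k(a^*)<p-\Delta_a/2$ and $n\,d(\hat S_k(a^*)\|p-\Delta_a/2)>\log t$. For the Bernoulli KL, $d(x\|p)-d(x\|y)-d(y\|p)=(x-y)(\theta_y-\theta_p)\ge 0$ whenever $x\le y\le p$, where $\theta_z=\log\frac{z}{1-z}$. Combined with $d(p-\Delta_a/2\|p)\ge\Delta_a^2/2$, this forces $n\,d(\hat S_k(a^*)\|p)>\log t+n\Delta_a^2/2$. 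For each fixed sample count $n$, Chernoff--KL bounds the probability of this by $t^{-1}e^{-n\Delta_a^2/2}$. A plain union bound over $n\le t$ then gives at most $2/(t\Delta_a^2)$. Summing over $t$ and $k$ yields $2M(1+\log T)/\Delta_a^2$. This is already of the stated form and needs no time-uniform inequality.
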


The bound matches the structural analysis of Section~\ref{sec:method}: the \(\log T\) factor comes from KL-optimal per-direction estimation, the \(M\) factor from a union bound over the \(M\) directions (not from the ambient dimension \(d\)), and the \(\Delta_a^{-2}\) factor from the uniform directional margin given by \(C\)-dominance. Each term maps to a specific part of the cone-induced structure, not to a generic UCB template.

\paragraph{Proof sketch.}
Fix any non-Pareto-\(C\)-optimal action \(a\). By the gap definition and the separability assumption, there exists a Pareto-\(C\)-optimal dominator \(a^\star\) that is uniformly better than \(a\) along every normalized preference direction, with margin \(\Delta_a\). PrefUCB estimates exactly these directional projections. Once \(a\) has been sampled enough for its directional confidence widths to be smaller than this margin, the worst-direction optimistic score of \(a\) falls below that of \(a^\star\), unless one of two bad events occurs: \(a\)'s empirical projection is overly optimistic in some direction, or the confidence index of \(a^\star\) fails to cover its true directional value. The first type of event is controlled by concentration over the sample counts of \(a\), and the second by the standard adaptive KL-UCB coverage bound. Taking a union bound over the \(M\) preference directions gives the arm-wise logarithmic pull bound, and summing over all non-Pareto-\(C\)-optimal actions yields the theorem.

\begin{myCor}
\label{cor:gap-regret}
Under the same conditions as Theorem~\ref{thm:main}, Algorithm~\ref{alg:prefucb} satisfies
\[
\mathbb E\!\left[R_T^{\mathrm{gap}}\right]
=
\mathcal O\!\left(
\sum_{a\notin P_C^*}
\left(
\frac{M\log T}{\Delta_a}+\Delta_a
\right)
\right).
\]
\end{myCor}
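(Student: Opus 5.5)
The plan is to obtain the corollary directly from the arm-wise pull-count bound established inside the proof of Theorem~\ref{thm:main}, rather than from the aggregate statement. Write \(N_a(T):=\sum_{t=1}^T \mathbb{I}(a_t=a)\) for the number of pulls of action \(a\). Since \(\Delta_a\) is a deterministic instance constant, the gap-weighted regret decomposes as
\[
\mathbb E\!\left[R_T^{\mathrm{gap}}\right]
=\sum_{a\notin P_C^*}\Delta_a\,\mathbb E\!\left[N_a(T)\right].
\]
The key input is the per-arm bound from the proof of Theorem~\ref{thm:main}: for every \(a\notin P_C^*\),
\[
\mathbb E[N_a(T)]\le c_1\frac{M\log T}{\Delta_a^2}+c_2,
\]
with constants \(c_1,c_2\) independent of \(a\), \(T\), and the instance. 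The theorem's statement is exactly the sum of these per-arm bounds, and the proof sketch (``arm-wise logarithmic pull bound, summing over all non-Pareto-\(C\)-optimal actions'') indicates that the proof goes through this intermediate bound.

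Given the per-arm bound, multiply by \(\Delta_a\) to get \(\Delta_a\mathbb E[N_a(T)]\le c_1 M\log T/\Delta_a + c_2\Delta_a\). Summing over \(a\notin P_C^*\) then gives the claimed \(\mathcal O\bigl(\sum_{a\notin P_C^*}(M\log T/\Delta_a+\Delta_a)\bigr)\). The additive \(\Delta_a\) term is what remains of the \(+1\) term, which counts the initialization pull and the bad-event contributions. Note that \(\Delta_a\le 1\) because each normalized projection lies in \([0,1]\), so this term is harmless.

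The only real obstacle is making sure the theorem's proof does yield the per-arm bound with \emph{uniform} constants. In particular, the bad-event term, where \(a^\star\)'s KL-UCB index fails to cover \(S_k(\bm r(a^\star))\), must not be counted only once globally. If the appendix controls it through a coverage-failure count shared across all suboptimal arms, I would instead charge each such failure round to the arm actually pulled. The failures for a fixed dominator \(a^\star\) and direction \(k\) contribute in expectation \(\mathcal O(\log\log T)\) or \(\mathcal O(1)\) rounds, by the Garivier--Cappé bound with \(\beta_t=\log t\). These rounds would then be absorbed into a per-arm term that is \(\mathcal O(M\log T/\Delta_a^2)\), possibly after a mild enlargement of \(c_1\). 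If that charging argument is awkward, a fallback is to bound these rounds crudely by the expected number of coverage failures over all \((a^\star,k)\) pairs. This still gives a term that is lower order than \(M\log T/\Delta_a\) for each arm, which suffices at the level of \(\mathcal O(\cdot)\).
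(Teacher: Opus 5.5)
Your proposal is correct and is essentially the paper's proof: the appendix writes $\mathcal R_T^{\mathrm{gap}}=\sum_{a\notin P_C^*}\Delta_a N_a(T)$, takes expectations, multiplies the per-arm bound $\mathbb E[N_a(T)]\le C_0(M+1)\log T/\Delta_a^2+1$ from the proof of Theorem~\ref{thm:main} by $\Delta_a$, and sums. Your contingency about a coverage-failure count shared across arms does not arise, because the appendix bounds $\sum_t\Pr(B_2(t))\le M C_{\mathrm{kl}}(1+\log T)$ separately for each suboptimal arm and absorbs it into the $\log T/\Delta_a^2$ term using $\Delta_a\le 1$; as an aside, the $\mathcal O(\log\log T)$ failure count you attribute to Garivier--Capp\'e requires the exploration rate $\log t+3\log\log t$, not $\beta_t=\log t$.
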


\begin{myRemark}
\label{rem:reduction}
Several classical settings are \emph{degenerate} cases of PDMOB, obtained by collapsing the cone. Setting \(\mathbf W=\mathbf I_d\) drops the preference information and makes \(C\) the positive orthant, which gives standard Pareto bandits~\citep{DBLP:conf/ijcnn/DruganN13}; here PrefUCB's \(M\)-dependence becomes \(d\), matching the known rate \(\mathcal O\!\left(\sum_{a\notin P_C^*} d\log T/\Delta_a^2\right)\). Setting \(M=1\) collapses \(C\) to a halfspace and removes the multi-direction structure, giving scalarized bandits~\citep{lai1985asymptotically,DBLP:journals/jmlr/Auer02} with the classical \(\mathcal O\!\left(\sum_{a:\Delta_a>0}\log T/\Delta_a^2\right)\) rate. Further setting \(d=1\) gives the classical multi-armed bandit with the same optimal bound. These reductions show that PrefUCB's structure is active only when the cone is nontrivial; when it degenerates, the algorithm and the bound reduce to the matching classical benchmark.
\end{myRemark}

\begin{myRemark}
\label{rem:scalability}
PrefUCB depends on the number of preference directions \(M\), not directly on the objective dimension \(d\). So when the preferences are sparse (\(M\ll d\)), the algorithm stays efficient in high-dimensional objective spaces.
\end{myRemark}

%% file: sections/experiments.tex

\section{Experiments}
\label{sec:experiments}

In this section, we evaluate PrefUCB on two complementary tasks: selective ensemble of LLMs with simulated preferences and online asset allocation with mandates learned from historical data. Both settings naturally fit the MOB framework where multi-dimensional rewards are observed only for selected actions.

\subsection{Selective Ensemble on LLMs}

We test PrefUCB on selective ensemble tasks where a learner sequentially selects models from a candidate pool under user-defined multi-criteria preferences. This naturally fits the MOB setting: each model is an action with multi-dimensional rewards, and due to high LLM evaluation costs, feedback is observed only for the selected model. We run experiments for \(T=10{,}000\) rounds with stochastic feedback sampled from a Beta distribution with concentration \(c=1\), reporting averages over 10 independent trials.

\textbf{Experimental setups.}~We construct our action set from the Open LLM Leaderboard\footnote{\url{https://huggingface.co/spaces/open-llm-leaderboard/open_llm_leaderboard}}, selecting $K=42$ representative models characterized by $d=7$ capabilities: accuracy, advanced problem-solving, reasoning, domain knowledge, interaction consistency, multitasking, and processing speed. We evaluate on four real-world scenarios with distinct preference structures: \emph{Legal Document Review} prioritizes reasoning while trading off multitasking; \emph{Customer Service} prioritizes multitasking while trading off problem-solving; \emph{High-frequency Trade} prioritizes speed while trading off problem-solving; and \emph{Scientific Discovery} prioritizes problem-solving while trading off speed. Details are provided in Appendix~\ref{app:preference-direction-matrices}.

\textbf{Baselines.} We compare PrefUCB against ParetoUCB~\citep{DBLP:conf/ijcnn/DruganN13}, ScalarizedUCB\(_{r.r.}\)~\citep{DBLP:conf/ijcnn/DruganN13}, and GT-PF (ground truth Pareto front). Details are provided in Appendix~\ref{app:contenders}.

\begin{figure*}[t]
  \centering
\begin{minipage}{.89\textwidth}
\centering
\includegraphics[width=0.98\textwidth]{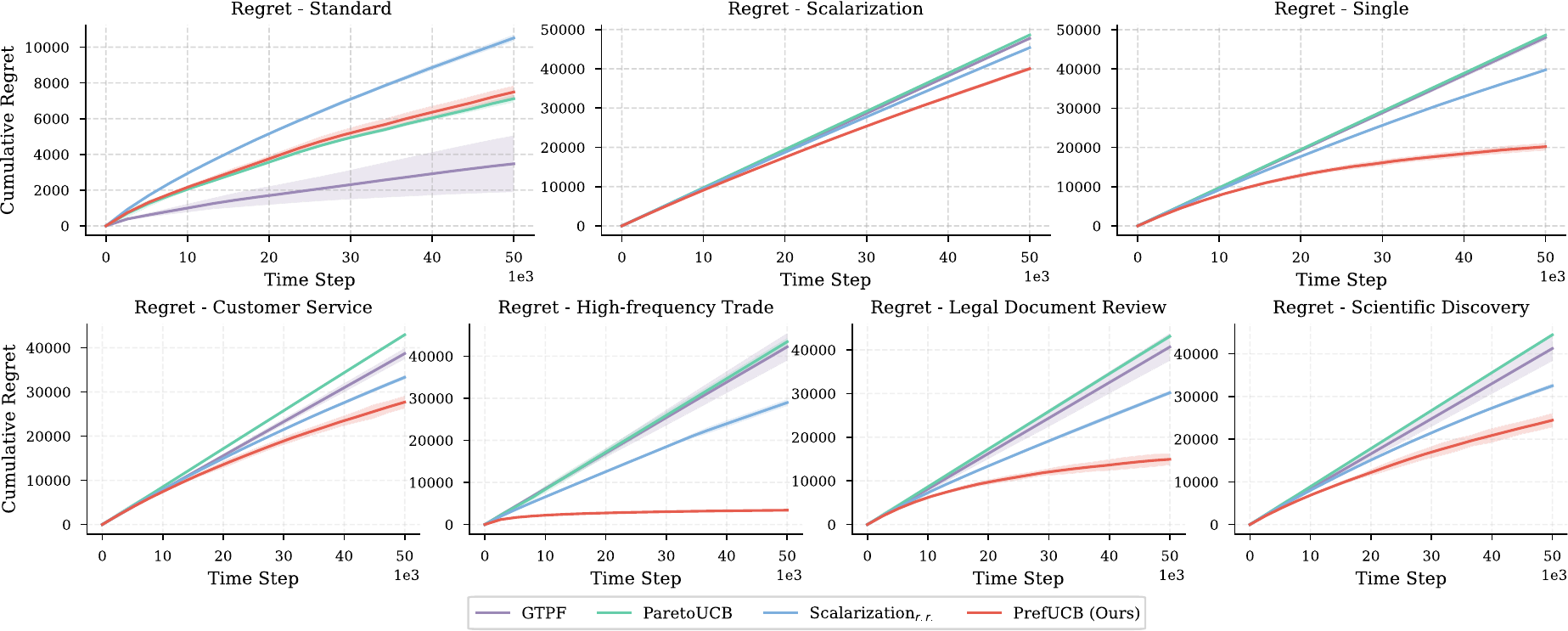}
\end{minipage}
\caption{\textbf{Cumulative regret across benchmark and real-world scenarios.} Shaded areas represent one standard deviation over repeated runs. PrefUCB consistently performs well across both synthetic and practical tasks.}
\vspace{-2mm}
\label{fig:regret}
\end{figure*}

\begin{table*}[!t]
\centering
\resizebox{.98\textwidth}{!}{
\begin{tabular}{c|l|c|c|c|c}
\toprule
\textbf{Performance} & \textbf{Method} & \textbf{Customer Service} & \textbf{Legal Document Review} & \textbf{High-frequency Trade} & \textbf{Scientific Discovery} \\
\midrule
\multirow{4}{*}{\shortstack{Main\\Objective ($\uparrow$)}}

& $\mbox{ScalarizedUCB}_\text{r.r.}$ & .590 $\pm$ .181 & .627 $\pm$ .192 & .650 $\pm$ .235 & .593 $\pm$ .171 \\
& ParetoUCB & .533 $\pm$ .190 & .522 $\pm$ .203 & .511 $\pm$ .210 & .531 $\pm$ .184 \\
& GT-PF & .573 $\pm$ .176 & .557 $\pm$ .202 & .536 $\pm$ .216 & .567 $\pm$ .172 \\
& PrefUCB & \textbf{\boldmath$.625 \pm .171$} & \textbf{\boldmath$.699 \pm .154$} & \textbf{\boldmath$.804 \pm .144$} & \textbf{\boldmath$.640 \pm .157$} \\
\midrule
\multirow{4}{*}{\shortstack{Relaxable\\Objective ($\downarrow$)}}

& $\mbox{ScalarizedUCB}_\text{r.r.}$ & .560 $\pm$ .196 & .585 $\pm$ .186 & .466 $\pm$ .209 & .463 $\pm$ .181 \\
& ParetoUCB & .518 $\pm$ .205 & .532 $\pm$ .191 & .518 $\pm$ .201 & .511 $\pm$ .209 \\
& GT-PF & .557 $\pm$ .200 & .571 $\pm$ .177 & .539 $\pm$ .201 & .539 $\pm$ .218 \\
& PrefUCB & \textbf{\boldmath$.457 \pm .196$} & \textbf{\boldmath$.460 \pm .160$} & \textbf{\boldmath$.374 \pm .166$} & \textbf{\boldmath$.436 \pm .154$} \\
\midrule
\multirow{4}{*}{\shortstack{Optimal Selection\\Ratio ($\uparrow$)}}

& $\mbox{ScalarizedUCB}_\text{r.r.}$ & .645 $\pm$ .191 & .681 $\pm$ .170 & .695 $\pm$ .167 & .646 $\pm$ .186 \\
& ParetoUCB & .571 $\pm$ .248 & .568 $\pm$ .249 & .569 $\pm$ .252 & .555 $\pm$ .257 \\
& GT-PF & .612 $\pm$ .223 & .593 $\pm$ .235 & .577 $\pm$ .244 & .587 $\pm$ .238 \\
& PrefUCB & \textbf{\boldmath$.690 \pm .157$} & \textbf{\boldmath$.801 \pm .080$} & \textbf{\boldmath$.946 \pm .014$} & \textbf{\boldmath$.722 \pm .138$} \\
\bottomrule
\end{tabular}
}
\caption{\textbf{Performance of PrefUCB on four real-world LLM selection tasks.} Each entry reports mean $\pm$ standard deviation over $10$ trials. The main objective is the prioritized capability; the relaxable objective can be traded off. The optimal selection ratio ($\uparrow$) denotes the proportion of optimal actions selected. Bold highlights best results.}
\label{tab:performance_comparison}
\vspace{-2mm}
\end{table*}

\textbf{Main results.} Since the indicator and gap-weighted regret follow the same trend, we report only the latter in Figure~\ref{fig:regret}. When $\mathbf{W}=\mathbf{I}_d$, PrefUCB matches ParetoUCB, as expected: the cone reduces to the classical Pareto order and no directional structure is available. Once preferences induce asymmetric trade-offs, the gap widens. Pareto-based methods are too coarse in this regime, since they treat actions as equivalent even when they differ along user-relevant directions; uniform sampling from the Pareto set, as in GT-PF, therefore ignores the finer structure given by the cone. PrefUCB instead exploits this structure to concentrate exploration on profiles aligned with the target trade-off. Table~\ref{tab:performance_comparison} shows that the gain extends beyond cumulative regret: PrefUCB more reliably preserves prioritized capabilities, relaxes secondary ones, and selects preference-optimal models more often.

\textbf{Trends in sub-objective performance.} Figure~\ref{fig:subobjective_trends} shows how PrefUCB adapts performance across objectives according to the specified preferences. It consistently improves preferred objectives, stabilizes moderate objectives after initial exploration, and trades off relaxable objectives when beneficial. Compared with baselines, which exhibit more uniform performance patterns across objectives, PrefUCB achieves more targeted preference-directed trade-offs. Table~\ref{tab:performance_comparison} summarizes these trade-offs quantitatively.

\begin{figure}[!t]
\centering
\begin{subfigure}{0.82\textwidth}
  \centering
  \includegraphics[width=\linewidth]{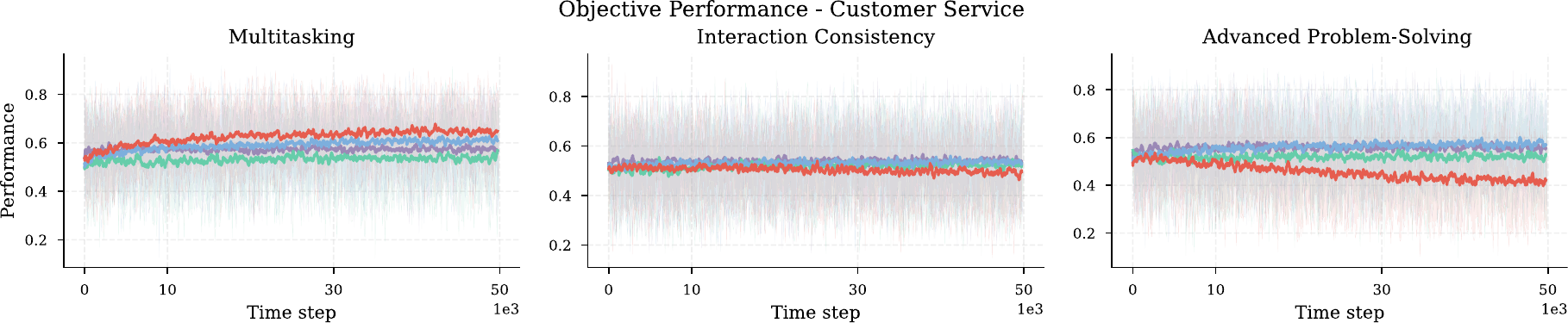}
\end{subfigure}
\vspace{0.6em}
\begin{subfigure}{0.82\textwidth}
  \centering
  \includegraphics[width=\linewidth]{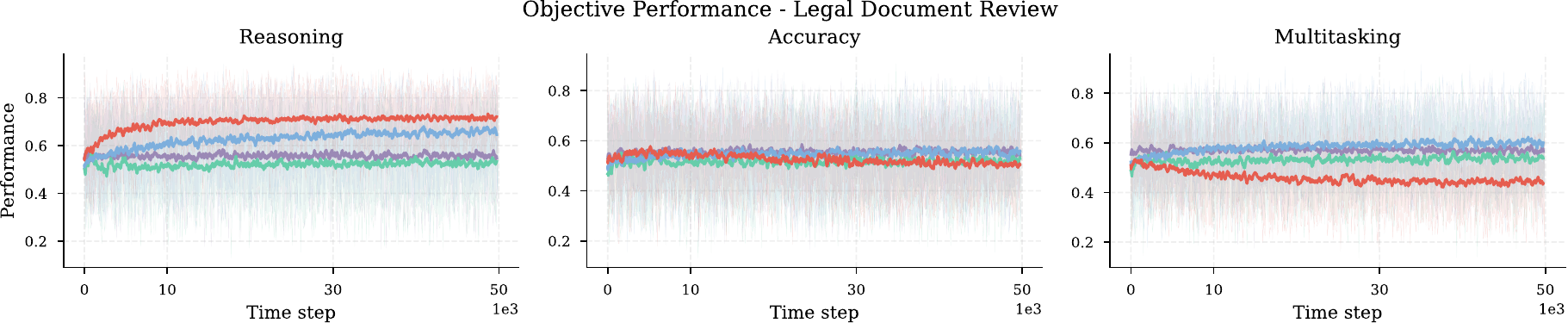}
\end{subfigure}
\vspace{0.6em}
\begin{subfigure}{0.82\textwidth}
  \centering
  \includegraphics[width=\linewidth]{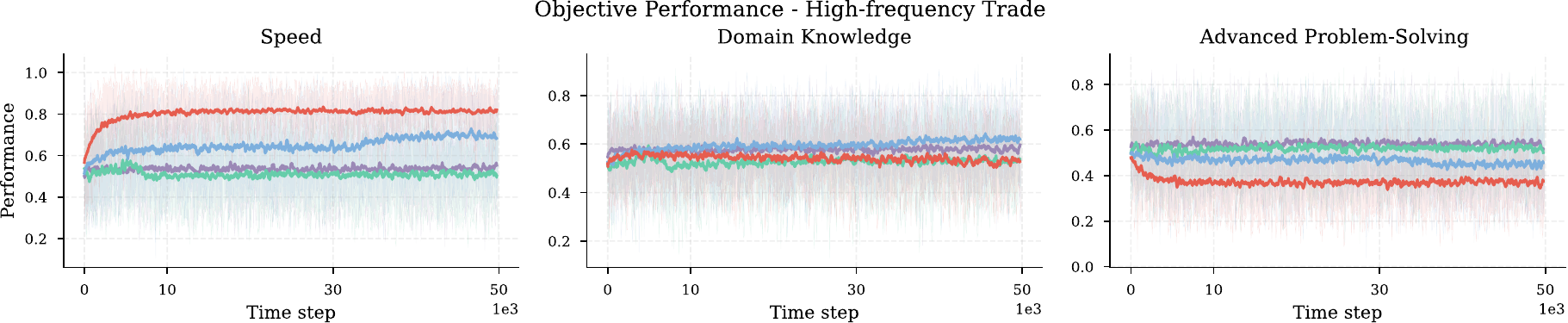}
\end{subfigure}
\vspace{0.6em}
\begin{subfigure}{0.82\textwidth}
  \centering
  \includegraphics[width=\linewidth]{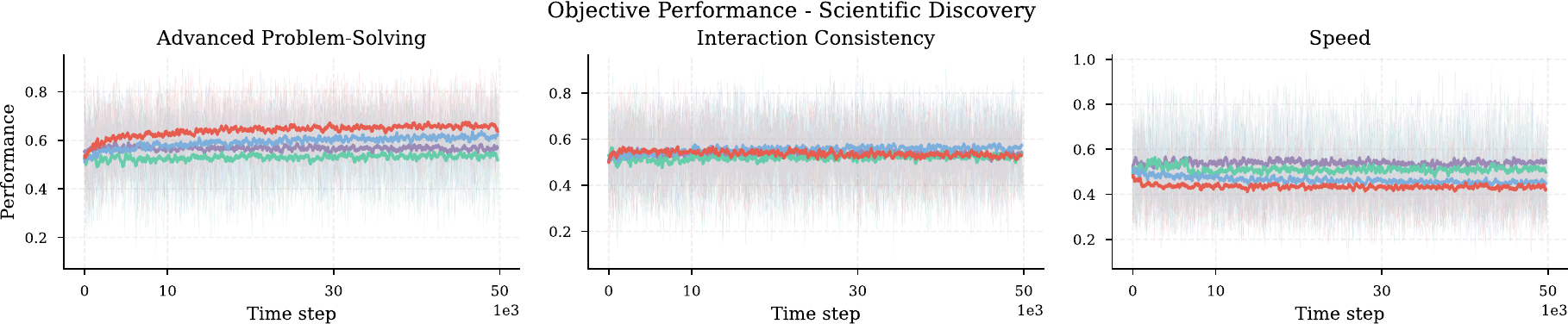}
\end{subfigure}
\vspace{0.4em}
\begin{subfigure}{0.68\textwidth}
  \centering
  \includegraphics[width=\linewidth]{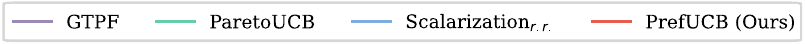}
\end{subfigure}
\caption{\textbf{Temporal trends of sub-objectives across real-world scenarios, grouped by preference level.} Solid lines denote the 100-round moving average, and shaded regions indicate the true rewards. PrefUCB consistently achieves better trade-offs by prioritizing preferred objectives, relaxing less critical ones, and maintaining moderate objectives.}
\vspace{-2mm}
\label{fig:subobjective_trends}
\end{figure}

\subsection{Online Asset Allocation}

We evaluate PrefUCB on online asset allocation, formulated as a PDMOB where the learner sequentially selects investment strategies to satisfy institutional mandates.

\textbf{Experimental setups.} We use the NYSE(O) dataset with 36 NYSE stocks over 22 years~\citep{li2014online} and construct a pool of $K=10$ base strategies ranging from passive benchmarks (Uniform Buy-and-Hold) to risk-aware approaches (Risk Parity)~\citep{cover1991universal}. At each day $t$, the learner selects a strategy and observes a 7-dimensional reward vector: Cumulative Wealth Growth (CWG), Sharpe Ratio (SR), Maximum Drawdown (MDD), Sortino Ratio, Alpha, Turnover (TO), and Calmar Ratio (CR). We run experiments for all data $T=5{,}651$ days' stock price, averaged over 20 independent trials.

Unlike the previous experiments, preference directions are now constructed according to eight real-world financial mandates that encode distinct institutional investment profiles: \emph{Aggressive Growth}, \emph{Market Neutral Hedge}, \emph{High Frequency Trading}, \emph{Income Focused}, \emph{Low Volatility}, \emph{Momentum Concentrated}, \emph{Risk Parity Enhanced}, and \emph{Balanced Growth}. Each mandate is specified by a preference direction $W \in \mathbb{R}^{3 \times 7}$ derived from historical performance data to reflect realistic institutional priorities and trade-offs. Details are provided in Appendix~\ref{app:mandate_details}.

\textbf{Baselines.} Following the same evaluation protocol as in the previous experiment, we compare PrefUCB against ParetoUCB~\citep{DBLP:conf/ijcnn/DruganN13}, ScalarizedUCB\(_{r.r.}\)~\citep{DBLP:conf/ijcnn/DruganN13}, and GT-PF. Details are provided in Appendix~\ref{app:contenders}.

\textbf{Main Results.} Since the indicator and gap-weighted regret follow the same trend, we report only the latter in Figure~\ref{fig:regret_financial} and Table~\ref{tab:main_results}. When the mandate is broadly aligned with the standard Pareto order, PrefUCB matches ParetoUCB, indicating that our formulation does not sacrifice performance in near-Pareto regimes. The gain emerges when the mandate specifies \emph{directed trade-offs} rather than uniform improvement across objectives. In this regime, Pareto-based exploration is too diffuse, wasting pulls on directions that are Pareto-valid but irrelevant to the mandate, while scalarized baselines commit to a fixed aggregation and are brittle when the trade-off is only partially specified. PrefUCB instead uses the cone to focus exploration on reward directions consistent with the mandate, which speeds up discrimination among strategies that look alike under the standard Pareto criterion but differ under directional preferences. The weaker performance of GT-PF confirms that the challenge here is not identifying the Pareto set, but resolving the finer structure the mandate induces within it. Preference-directed optimism is thus most useful when institutional requirements refine, rather than replace, the standard notion of multi-objective optimality.

\begin{figure*}[!t]
  \centering
\begin{minipage}{.98\textwidth}
\centering
\includegraphics[width=0.98\textwidth]{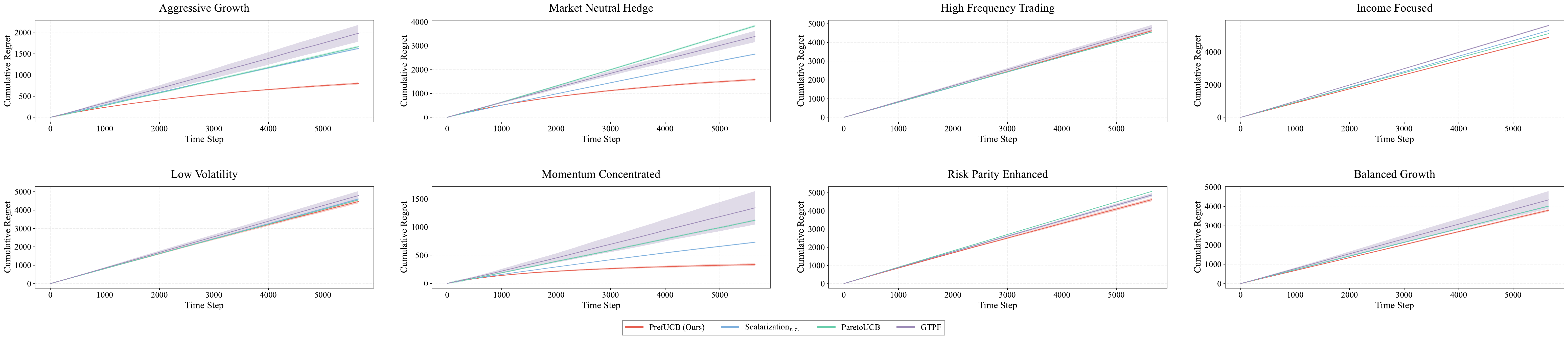}
\end{minipage}
\caption{\textbf{Cumulative gap-weighted regret analysis for online asset allocation tasks.} Shaded areas represent standard deviation. PrefUCB demonstrates robust performance across diverse financial environments.}
\label{fig:regret_financial}
\end{figure*}

\begin{table*}[!t]
\centering
\begin{minipage}{\textwidth}
\centering
\resizebox{0.98\textwidth}{!}{%
\begin{tabular}{lcccc}
\toprule
{Scenario} & {{PrefUCB}} & {$\mbox{ScalarizedUCB}_{\text{r.r.}}$} & {{ParetoUCB}} & {{GT-PF}} \\
\midrule
Aggressive Growth & \textbf{\boldmath$0.142 \pm 0.004~(1)$} & 0.288 $\pm$ 0.003~(2)~\CIRCLE & 0.295 $\pm$ 0.004~(3)~\CIRCLE & 0.351 $\pm$ 0.035~(4)~\CIRCLE \\
Market Neutral Hedge & \textbf{\boldmath$0.280 \pm 0.008~(1)$} & 0.469 $\pm$ 0.005~(2)~\CIRCLE & 0.679 $\pm$ 0.008~(4)~\CIRCLE & 0.601 $\pm$ 0.042~(3)~\CIRCLE \\
High Frequency Trading & 0.808 $\pm$ 0.004~(2) & 0.819 $\pm$ 0.009~(3)~\CIRCLE & \textbf{\boldmath$0.806 \pm 0.003~(1)$}~\Circle & 0.847 $\pm$ 0.028~(4)~\CIRCLE \\
Income Focused & \textbf{\boldmath$0.867 \pm 0.007~(1)$} & 0.941 $\pm$ 0.002~(3)~\CIRCLE & 0.909 $\pm$ 0.002~(2)~\CIRCLE & 0.997 $\pm$ 0.007~(4)~\CIRCLE \\
Low Volatility & \textbf{\boldmath$0.790 \pm 0.017~(1)$} & 0.815 $\pm$ 0.006~(3)~\CIRCLE & 0.806 $\pm$ 0.004~(2)~\CIRCLE & 0.848 $\pm$ 0.044~(4)~\CIRCLE \\
Momentum Concentrated & \textbf{\boldmath$0.059 \pm 0.004~(1)$} & 0.130 $\pm$ 0.002~(2)~\CIRCLE & 0.199 $\pm$ 0.003~(3)~\CIRCLE & 0.238 $\pm$ 0.052~(4)~\CIRCLE \\
Risk Parity Enhanced & \textbf{\boldmath$0.818 \pm 0.015~(1)$} & 0.862 $\pm$ 0.004~(2)~\CIRCLE & 0.899 $\pm$ 0.003~(4)~\CIRCLE & 0.864 $\pm$ 0.017~(3)~\CIRCLE \\
Balanced Growth & \textbf{\boldmath$0.672 \pm 0.010~(1)$} & 0.710 $\pm$ 0.004~(2)~\CIRCLE & 0.710 $\pm$ 0.004~(2)~\CIRCLE & 0.767 $\pm$ 0.080~(4)~\CIRCLE \\
\midrule
\midrule
\textbf{Avg. Rank} & \textbf{1.13} & 2.38 & 2.63 & 3.75 \\
\bottomrule
\end{tabular}%
}
\end{minipage}
\caption{\textbf{Averaged gap-weighted regret results across financial market scenarios.} Values report averaged cumulative regret over time horizon (mean $\pm$ std), with ranks shown in parentheses; lower is better. The best performance in each scenario is highlighted in bold. Statistical significance (t-test): \CIRCLE~indicates the baseline is significantly worse than PrefUCB ($p < 0.05$); \Circle~indicates no significant difference. PrefUCB achieves the best average rank (1.13) and wins in 7 out of 8 scenarios.}
\label{tab:main_results}
\vspace{-2mm}
\end{table*}

\textbf{Trends in sub-objective performance.} We also analyze performance trends at the sub-objective level. For clarity of presentation, the details are provided in Appendix~\ref{sub:temporal_dynamics_of_objectives}.

%% file: sections/related.tex

\section{Related Works}

\textbf{Multi-objective optimization and selective ensemble}.
Selective ensemble naturally involves trade-offs among predictive accuracy, diversity, and model complexity~\citep{zhou2012ensemble,DBLP:journals/ai/ZhouWT02}. Existing methods typically cast ensemble pruning/selection as an offline multi-objective optimization problem and approximate the Pareto frontier via evolutionary or pruning-based procedures~\citep{DBLP:conf/aaai/QianYZ15,DBLP:books/sp/ZhouYQ19}. More broadly, preference-guided multi-objective optimization biases the search toward desirable regions of the frontier through scalarization~\citep{DBLP:conf/uai/PariaKP19}, preference-aware constraints~\citep{DBLP:conf/nips/AbdolshahSR0V19,chen2024ferero}, or cone-based dominance relations~\citep{censor1977pareto,nasrabadi2019convex,korhonen2016dual}. In comparison, we study selective ensemble as a budgeted sequential learning problem over a fixed model pool, where multi-objective performance is revealed through stochastic evaluations on sampled data. Accordingly, the preference cone is used to shape online evaluation and selection, rather than to refine an accessible frontier. Selective ensemble and model selection have also been studied in active learning, where ensemble-based querying can jointly address data selection and model selection under limited labeling budgets~\citep{sugiyama2008batch,sugiyama2008active}.

\textbf{Multi-objective bandits with preferences}.
Classical multi-objective bandits study vector-valued rewards under the standard componentwise Pareto order, either through Pareto-regret minimization~\citep{DBLP:conf/ijcnn/DruganN13,xu2023pareto} or Pareto-front identification~\citep{auer2016pareto,crepon2024sequential}; contextual variants have also been studied~\citep{lu2019multi}. A separate line assumes fully specified preferences, including scalarized criteria such as the generalized Gini index~\citep{busa2017gini}, dominant-objective formulations~\citep{tekin2018dominant}, lexicographic orders~\citep{huyuk2021lex}, and hierarchical Pareto-lexicographic rules~\citep{cheng2024hierarchize}. Another line learns a complete utility model from user feedback, e.g., through pairwise comparisons~\citep{DBLP:conf/aldt/RoijersZN17,reymond2024interactive} or preference estimation~\citep{cao2026customization}. Preference-based feedback has also been studied in dueling bandits, including recent distributed formulations that address collaborative exploration from preference comparisons~\citep{raveh2025multiplayer}. Closest to our work, recent pure-exploration studies model incomplete preferences by a polyhedral cone~\citep{ararat2023vector,karagozlu2024learning} and analyze fixed-confidence identification under cone-induced optimality~\citep{shukla2024preference}. We instead study regret minimization under cone-induced partial preferences. The key distinction is that we must make high-quality decisions throughout learning, not only certify the correct set at stopping. To the best of our knowledge, this regret-minimizing formulation has not been studied before, and it subsumes both Pareto-based and scalarization-based bandit models as special cases.

%% file: sections/appendix.tex

\newpage
\appendix
\onecolumn
\begin{center}
\LARGE{\textbf{Appendix}}
\end{center}

\section{Proof of Theorem~\ref{thm:main} and Corollary~\ref{cor:gap-regret}}
\label{app:theorem}

In this section, we provide a complete proof of Theorem~\ref{thm:main}. We first establish several auxiliary lemmas and then derive an arm-wise pull-count bound for every suboptimal action. Theorem~\ref{thm:main} follows by summing these bounds over all suboptimal actions, and Corollary~\ref{cor:gap-regret} is then immediate.

\subsection{Auxiliary Lemmas}

To justify the use of KL-style confidence bounds on $\hat{S}_k(a)$, we note the following property.

\begin{myLemma}
\label{lem:transformed-feedback}
For each $k\in[M]$, define the scalar transformed feedback $Z_{t,k}(a) := S_k(\bm{y}_t(a))$. Since $\bm{y}_t(a)\in[0,1]^d$, by the construction of $L_k$ and $U_k$ in Definition~\ref{def:norm-proj}, we have $Z_{t,k}(a)\in[0,1]$ and $\mathbb{E}[Z_{t,k}(a)] = S_k(\bm{r}(a))$. Moreover, $\hat{S}_k(a) = \frac{1}{n(a)}\sum_{\tau: a_\tau = a} Z_{\tau,k}(a)$. Since the reward vectors $\{\bm{y}_t(a)\}_{t\geq 1}$ are independent and identically distributed for each arm $a$ (cf.\ Section~\ref{sec:problem-formulation}), $\hat{S}_k(a)$ is the empirical mean of $[0,1]$-bounded independent observations with common expectation $S_k(\bm{r}(a))$.
\end{myLemma}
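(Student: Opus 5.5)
The lemma has three claims: boundedness and unbiasedness of $Z_{t,k}(a)$, the empirical-mean representation of $\hat S_k(a)$, and the i.i.d.\ structure. Each follows from affinity of $S_k$ and the sampling model, so I will check them in that order.

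\emph{Boundedness.} Fix $k$ and write $S_k(\bm y)=(\mathbf w_k^\top\bm y-L_k)/(U_k-L_k)$. For $\bm y\in[0,1]^d$ the sum $\mathbf w_k^\top\bm y=\sum_i \mathbf w_{k,i}y_i$ is minimized coordinatewise by taking $y_i=1$ when $\mathbf w_{k,i}<0$ and $y_i=0$ otherwise. Its minimum is therefore $\sum_i\min(0,\mathbf w_{k,i})=L_k$, and symmetrically its maximum is $U_k$. Hence $L_k\le \mathbf w_k^\top\bm y\le U_k$, and dividing by $U_k-L_k$ gives $Z_{t,k}(a)\in[0,1]$. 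This requires $U_k>L_k$, which holds exactly when $\mathbf w_k\neq\bm 0$. I will note that a zero row imposes no constraint on $C$ and can be discarded without loss of generality. I regard this degenerate case as the only real subtlety in the lemma.

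\emph{Unbiasedness.} Since $S_k$ is affine, linearity of expectation gives $\mathbb E[Z_{t,k}(a)]=S_k(\mathbb E[\bm y_t(a)])=S_k(\bm r(a))$, using the protocol assumption $\mathbb E[\bm y_t\mid a_t=a]=\bm r(a)$.

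\emph{Empirical-mean identity.} The update in Algorithm~\ref{alg:prefucb} is the running average, so by induction on $n(a)$ we have $\hat{\bm r}(a)=\frac1{n(a)}\sum_{\tau:a_\tau=a}\bm y_\tau(a)$, with the initialization pull included. Because the weights $1/n(a)$ sum to one, affine maps commute with averages, and applying $S_k$ yields $\hat S_k(a)=\frac1{n(a)}\sum_{\tau:a_\tau=a}Z_{\tau,k}(a)$.

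\emph{i.i.d.\ structure.} $Z_{\tau,k}(a)$ is a fixed measurable function of $\bm y_\tau(a)$, so the i.i.d.\ property of the per-arm rewards transfers to the transformed observations with common mean $S_k(\bm r(a))$. To make this precise despite the adaptive choice of $a_t$, I will use the standard reward-table construction: the $s$-th pull of arm $a$ returns the $s$-th element of an i.i.d.\ sequence. Then $\hat S_k(a)$ after $s$ pulls is the mean of the first $s$ i.i.d.\ $[0,1]$ variables. This is exactly the setting needed for KL-type (Chernoff) concentration later on.
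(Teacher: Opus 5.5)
Your proposal is correct and follows the paper's route. The paper justifies the lemma inline via the construction of $L_k,U_k$, the affinity of $S_k$, the running-average update, and the per-arm i.i.d.\ model, and it makes the last point precise right after the lemma by indexing observations by sample order, $Z_{a,k,s}=S_k(\bm y_{a,s})$, which is your reward-table construction. Your extra details are sound and only implicit in the paper: the explicit check that $\mathbf w_k^\top\bm y\in[L_k,U_k]$ on $[0,1]^d$, and the observation that $U_k-L_k=\|\mathbf w_k\|_1>0$ requires discarding zero rows.
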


Lemma~\ref{lem:transformed-feedback} ensures that for each preference direction $k$, the empirical projection $\hat{S}_k(a)$ is the sample mean of $[0,1]$-bounded random variables with expectation $S_k(\bm{r}(a))$. This allows us to apply standard KL-UCB-style concentration arguments for $[0,1]$-bounded random variables~\citep{garivier2011kl} to each directional projection.

For each action $a\in[K]$ and direction $k\in[M]$, write
\[
S_k(a) := S_k(\bm r(a))
\]
for the true normalized projection of action $a$ along direction $k$.

For the analysis, it is convenient to index transformed observations by sample order within each arm. Specifically, for each $(a,k)$, let $\{Z_{a,k,s}\}_{s\ge 1}$ denote the scalar sequence obtained by applying the projection $S_k(\cdot)$ to the successive reward observations of arm $a$:
\[
Z_{a,k,s} := S_k(\bm y_{a,s}),
\]
where $\bm y_{a,s}\in[0,1]^d$ is the $s$-th reward sample generated by arm $a$. By Lemma~\ref{lem:transformed-feedback} in the main text, for every fixed $(a,k)$, the sequence $\{Z_{a,k,s}\}_{s\ge 1}$ is i.i.d., takes values in $[0,1]$, and satisfies
\[
\mathbb E[Z_{a,k,s}] = S_k(a).
\]

For each round $t$, let $n_t(a)$ denote the number of times arm $a$ has been pulled \emph{before} round $t$, and define
\[
\hat S_k(a,t) := \frac{1}{n_t(a)}\sum_{s=1}^{n_t(a)} Z_{a,k,s}.
\]
Since PrefUCB pulls each arm once during initialization, we always have $n_t(a)\ge 1$ whenever the index is evaluated.

We also define the directional KL-UCB index
\[
\UCB_k(a,t)
:=
\sup\Bigl\{
q\in[\hat S_k(a,t),1]:
d(\hat S_k(a,t)\,\|\,q)\le \frac{\log t}{n_t(a)}
\Bigr\},
\]
where
\[
d(x\|y):= x\log\frac{x}{y} + (1-x)\log\frac{1-x}{1-y}
\]
is the binary Kullback--Leibler divergence. The PrefUCB index of arm $a$ at round $t$ is therefore
\[
I_t(a) := \min_{k\in[M]} \UCB_k(a,t).
\]

\begin{myLemma}[Chernoff--KL Concentration for Bounded Variables]
\label{lem:worst-case}
Let $X_1,\ldots,X_n$ be i.i.d.\ random variables taking values in $[0,1]$ with mean $p$. Then for any $\epsilon>0$,
\begin{align}
\Pr\!\left[\frac{1}{n}\sum_{i=1}^n X_i \ge p+\epsilon\right]
&\le \exp\!\left(-n\,d(p+\epsilon\,\|\,p)\right), \label{eq:chernoff-upper-app}\\
\Pr\!\left[\frac{1}{n}\sum_{i=1}^n X_i \le p-\epsilon\right]
&\le \exp\!\left(-n\,d(p-\epsilon\,\|\,p)\right), \label{eq:chernoff-lower-app}
\end{align}
where $d(\cdot\|\cdot)$ denotes the binary KL divergence. Moreover, for all $p,q\in[0,1]$,
\begin{equation}
\label{eq:pinsker-app}
d(q\|p)\ge 2(q-p)^2.
\end{equation}
\end{myLemma}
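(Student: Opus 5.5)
The plan is to reduce both tail bounds to the Bernoulli case by a convexity (mean-dominance) argument, apply the exponential Markov (Cramér--Chernoff) method, and identify the optimized exponent with the binary KL divergence. Pinsker's inequality \eqref{eq:pinsker-app} is proved separately by a one-variable calculus argument.

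\emph{Step 1: moment generating function domination.} For $x\in[0,1]$ and any $\lambda\in\mathbb R$, convexity of $x\mapsto e^{\lambda x}$ gives
\[
e^{\lambda x}\le (1-x)+x e^{\lambda}.
\]
Taking expectations gives $\mathbb E[e^{\lambda X_i}]\le 1-p+pe^{\lambda}$, which is the moment generating function of a Bernoulli$(p)$ variable. By independence, $\mathbb E[e^{\lambda\sum_i X_i}]\le (1-p+pe^\lambda)^n$.

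\emph{Step 2: upper tail \eqref{eq:chernoff-upper-app}.} Set $q=p+\epsilon$. If $q>1$, the event is empty, since the sample mean is at most $1$, and the bound is trivial. Otherwise, for every $\lambda\ge 0$, Markov's inequality yields
\[
\Pr\Bigl[\tfrac1n\textstyle\sum_i X_i\ge q\Bigr]\le \exp\bigl(-n[\lambda q-\log(1-p+pe^\lambda)]\bigr).
\]
We then maximize $\lambda q-\log(1-p+pe^\lambda)$ over $\lambda\ge0$. For $p<q<1$ the stationary point is $e^{\lambda^*}=\frac{q(1-p)}{p(1-q)}$, and $\lambda^*\ge0$ holds because $q\ge p$. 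Substituting $\lambda^*$ gives exactly $q\log\frac qp+(1-q)\log\frac{1-q}{1-p}=d(q\|p)$. The boundary case $q=1$ is handled by letting $\lambda\to\infty$, which gives the exponent $\log(1/p)=d(1\|p)$. The case $p=0$ is trivial, because then $X_i=0$ almost surely.

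\emph{Step 3: lower tail \eqref{eq:chernoff-lower-app}.} Apply Step 2 to $X_i':=1-X_i\in[0,1]$, which have mean $1-p$. This gives the exponent $d(1-p+\epsilon\,\|\,1-p)$, and the symmetry $d(1-x\|1-y)=d(x\|y)$ turns it into $d(p-\epsilon\|p)$.

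\emph{Step 4: Pinsker \eqref{eq:pinsker-app}.} Fix $p\in(0,1)$ and let $f(q)=d(q\|p)-2(q-p)^2$ for $q\in[0,1]$. Then $f(p)=0$ and
\[
f'(q)=\log\frac{q(1-p)}{(1-q)p}-4(q-p),
\]
so $f'(p)=0$. Moreover, $f''(q)=\frac{1}{q(1-q)}-4\ge0$ because $q(1-q)\le\frac14$. Hence $f$ is convex with its minimum at $q=p$, so $f\ge0$; the endpoints $q\in\{0,1\}$ follow by continuity. If $p\in\{0,1\}$, then $d(q\|p)=+\infty$ for $q\ne p$, and the inequality is trivial.

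I expect the main obstacle to be bookkeeping rather than any conceptual step: verifying that the optimizer $\lambda^*$ has the correct sign, and handling the degenerate boundaries $q\in\{0,1\}$ and $p\in\{0,1\}$ under the conventions $0\log 0=0$ and $d=+\infty$. I would treat these by continuity or limits as indicated above.
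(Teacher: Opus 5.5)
Your proof is correct. It differs from the paper's only in that the paper gives no argument: it defers \eqref{eq:chernoff-upper-app}--\eqref{eq:chernoff-lower-app} to Garivier and Capp\'e (2011) and calls \eqref{eq:pinsker-app} the usual Pinsker bound.

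What you supply is the classical derivation behind those citations:
\begin{itemize}
\item Hoeffding's convexity bound $e^{\lambda x}\le 1-x+xe^{\lambda}$, which dominates each moment generating function by the Bernoulli$(p)$ one.
\item The Cram\'er--Chernoff bound evaluated at $e^{\lambda^*}=q(1-p)/(p(1-q))$. You only need $\lambda^*\ge 0$ here, not that $\lambda^*$ is the maximizer.
\item The reflection $X_i\mapsto 1-X_i$ together with the symmetry $d(1-x\|1-y)=d(x\|y)$ for the lower tail.
\item The convexity proof of binary Pinsker via $f''(q)=1/(q(1-q))-4\ge 0$.
\end{itemize}
All the computations check out, including the optimized exponent $d(q\|p)$ and the limit $\log(1/p)=d(1\|p)$ at $q=1$.

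The citation buys brevity. Your version buys three things:
\begin{itemize}
\item The lemma becomes self-contained.
\item It shows that independence with common mean $p$ suffices; identical distribution is never used.
\item It makes explicit the degenerate cases the statement glosses over. For $p+\epsilon>1$ the right-hand side of \eqref{eq:chernoff-upper-app} is not even defined, and your observation that the event is then empty is the right convention. Likewise for $p\in\{0,1\}$ under $d=+\infty$.
\end{itemize}
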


\begin{proof}
The tail bounds~\eqref{eq:chernoff-upper-app}--\eqref{eq:chernoff-lower-app} are standard Chernoff--KL inequalities for $[0,1]$-bounded random variables~\citep{garivier2011kl}. Inequality~\eqref{eq:pinsker-app} is the usual Pinsker-type lower bound for binary relative entropy.
\end{proof}

\begin{myLemma}[UCB Width Bound]
\label{lem:ucb-width}
For any action $a$, direction $k$, and round $t$,
\[
\UCB_k(a,t)\le \hat S_k(a,t) + \sqrt{\frac{\log t}{2\,n_t(a)}}.
\]
\end{myLemma}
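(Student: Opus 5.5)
The plan is to combine the definition of the index with the Pinsker-type inequality~\eqref{eq:pinsker-app} from Lemma~\ref{lem:worst-case}. Fix $a$, $k$, $t$, and abbreviate $\hat p := \hat S_k(a,t)$ and $n := n_t(a)\ge 1$. By definition, $\UCB_k(a,t)$ is the supremum of the set
\[
A := \Bigl\{ q\in[\hat p,1] : d(\hat p\,\|\,q)\le \tfrac{\log t}{n}\Bigr\}.
\]
Since $q=\hat p$ gives $d(\hat p\|\hat p)=0\le \log t/n$, the set $A$ is nonempty whenever $t\ge 1$. So it suffices to show that every $q\in A$ obeys the claimed bound and then take the supremum.

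For any $q\in A$, apply~\eqref{eq:pinsker-app} with the roles $d(\hat p\|q)\ge 2(q-\hat p)^2$. This is symmetric in the squared difference, so the argument order does not matter. It gives
\[
2(q-\hat p)^2 \le d(\hat p\,\|\,q) \le \frac{\log t}{n}.
\]
Because $q\ge \hat p$, taking square roots yields $q-\hat p\le \sqrt{\log t/(2n)}$. Hence $q\le \hat p+\sqrt{\log t/(2n)}$ for all $q\in A$, and passing to the supremum gives the lemma.

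The only delicate point is the boundary behavior of the binary KL divergence. When $\hat p\in\{0,1\}$ or $q=1$, $d(\hat p\|q)$ may be $+\infty$, and the usual conventions $0\log 0=0$ and $x\log(x/0)=+\infty$ for $x>0$ apply. I would note that the Pinsker inequality continues to hold in the extended sense there, since the right-hand side is $+\infty$. Points with $d=+\infty$ are excluded from $A$ anyway, so the argument above covers all $q\in A$ without separate casework.

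Finally, $\UCB_k(a,t)\le 1$ trivially, so the bound is informative only when the right-hand side is below $1$. This does not affect the validity of the inequality.
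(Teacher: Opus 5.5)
Your proof is correct and is essentially the paper's argument. The paper phrases it contrapositively: any $q>\hat p+\sqrt{\log t/(2n_t(a))}$ has $d(\hat p\|q)>\log t/n_t(a)$ by the Pinsker-type bound, so it is infeasible. You instead bound every feasible $q$ directly with the same inequality and then take the supremum.
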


\begin{proof}
Let $\hat p := \hat S_k(a,t)$ and let $q\ge \hat p$. By~\eqref{eq:pinsker-app},
\[
d(\hat p\|q)\ge 2(q-\hat p)^2.
\]
Hence, if
\[
q > \hat p + \sqrt{\frac{\log t}{2\,n_t(a)}},
\]
then
\[
d(\hat p\|q) > 2\cdot\frac{\log t}{2\,n_t(a)} = \frac{\log t}{n_t(a)},
\]
so such a $q$ cannot belong to the feasible set defining $\UCB_k(a,t)$. Therefore,
\[
\UCB_k(a,t)\le \hat S_k(a,t) + \sqrt{\frac{\log t}{2\,n_t(a)}}.
\]
\end{proof}

\begin{myLemma}[UCB Coverage]
\label{lem:ucb-coverage}
For any action $a$, direction $k$, and horizon $T$,
\[
\sum_{t=K+1}^{T}
\Pr\!\left[\UCB_k(a,t) < S_k(a)\right]
\le
C_{\mathrm{kl}}(1+\log T),
\]
where $C_{\mathrm{kl}}>0$ is a universal constant.
\end{myLemma}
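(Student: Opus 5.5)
We would prove Lemma~\ref{lem:ucb-coverage} with the standard KL-UCB coverage argument of~\citet{garivier2011kl}, applied to the scalar i.i.d.\ sequence $\{Z_{a,k,s}\}_{s\ge1}$ given by Lemma~\ref{lem:transformed-feedback}. Write $p:=S_k(a)$ and $\hat p_{s}:=\frac1s\sum_{i=1}^s Z_{a,k,i}$. The map $q\mapsto d(\hat p\|q)$ is continuous and increasing on $[\hat p,1]$. Hence the event $\{\UCB_k(a,t)<p\}$ forces $\hat S_k(a,t)<p$ and
\[
n_t(a)\,d\bigl(\hat S_k(a,t)\,\|\,p\bigr)>\log t .
\]
Since $n_t(a)$ is random and depends on the history, we cannot condition on it directly. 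Instead we bound the event by the union over its possible values:
\[
\{\UCB_k(a,t)<p\}\subseteq\bigcup_{s=1}^{t}\Bigl\{\hat p_s<p,\ s\,d(\hat p_s\|p)>\log t\Bigr\}.
\]

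\textbf{Step 1 (per-sample-size bound).} Fix $s$. Let $x_s\in[0,p)$ be the point with $s\,d(x_s\|p)=\log t$; if no such point exists, the event is empty. By monotonicity the event is $\{\hat p_s<x_s\}$, and by~\eqref{eq:chernoff-lower-app} its probability is at most $\exp(-s\,d(x_s\|p))=1/t$. A naive union over $s\le t$ then gives probability at most $1$, which is useless. This is the main obstacle: the cost of the union over random sample sizes must be reduced to logarithmic.

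\textbf{Step 2 (peeling / self-normalized deviation).} To handle Step 1's obstacle we use the self-normalized maximal inequality of~\citet[Thm.~10]{garivier2011kl}, which is itself proved by peeling $s$ into geometric blocks $[(1+\eta)^{j},(1+\eta)^{j+1})$ and applying a Doob-type martingale maximal inequality on each block:
\[
\Pr\Bigl[\exists s\le t:\ \hat p_s<p,\ s\,d(\hat p_s\|p)>\delta\Bigr]\le e\,\lceil\delta\log t\rceil e^{-\delta}.
\]
With $\delta=\log t$ this gives $\Pr[\UCB_k(a,t)<p]\le e\lceil(\log t)^2\rceil/t$. Summing over $t\le T$ yields $O((\log T)^3)$, not $O(1+\log T)$. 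So the exploration rate $\beta_t=\log t$ is too small for this route.

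\textbf{Step 3 (sharpening to the claimed rate).} To get the claimed $C_{\mathrm{kl}}(1+\log T)$ we would first try to avoid a per-round union bound altogether. We switch the sum over $t$ into a sum over sample counts $s$: each value of $n_t(a)$ spans a contiguous run of rounds, and we bound the expected number of rounds in which the index at count $s$ fails. For fixed $s$, failure at round $t$ requires $s\,d(\hat p_s\|p)>\log t$, i.e.\ $t<\exp(s\,d(\hat p_s\|p))$. The number of such rounds $t\le T$ is therefore at most $\min\{T,\exp(s\,d(\hat p_s\|p))\}$. Its expectation can be computed from the tail bound $\Pr[s\,d(\hat p_s\|p)\ge u]\le 2e^{-u}$, which follows from~\eqref{eq:chernoff-lower-app}; this gives $\mathbb E[\min\{T,e^{s d}\}]\le 1+\int_0^{\log T}e^{u}\cdot 2e^{-u}\,du=O(1+\log T)$.

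We still have to verify that summing over $s$ does not introduce an extra factor. The saving comes from the fact that, once the index of arm $a$ is below $p$, the arm's count $s$ stays frozen only while the arm is not pulled. Each frozen count thus contributes one block of rounds. If this summation proves too loose, the fallback is to state the lemma with $\beta_t=\log t+3\log\log t$, as in~\citet{garivier2011kl}, for which Step 2 directly gives a bound of $O(\log\log T)\le C_{\mathrm{kl}}(1+\log T)$.
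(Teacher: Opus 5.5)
\textbf{Gap: Step 3 is never completed, and the fallback proves a different lemma.}

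Your Steps 1 and 2 are correct. Step 2 is exactly the tool the paper's proof relies on: the paper proves this lemma in one line by citing the Garivier--Capp\'e self-normalized coverage inequality ``for any adaptive sampling rule''. As you computed, at $\beta_t=\log t$ that inequality only gives $\Pr[\UCB_k(a,t)<p]\le e\lceil(\log t)^2\rceil/t$, hence $O((\log T)^3)$ after summing. So filling in the paper's citation does not produce $C_{\mathrm{kl}}(1+\log T)$ either.

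The genuine gap is Step 3, which is where the extra logarithmic factors would have to be removed, and which you leave unfinished.
\begin{itemize}
\item Your per-count estimate $\mathbb E[\min\{T,e^{s\,d(\hat p_s\|p)}\}]=O(1+\log T)$ must be summed over up to $T$ possible counts, which gives $O(T\log T)$.
\item Noting that each count occupies one contiguous block of rounds does not help, because the per-count bound never uses that the blocks are disjoint.
\item Even the sharper charge $(e^{s\,d(\hat p_s\|p)}-s)^+$ (a round with count $s$ has $t>s$) sums to order $\sum_{s\le T}\log(1+T/s)=\Theta(T)$.
\end{itemize}

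More fundamentally, no argument that uses only the adaptivity of the sampling rule can close this step. Take the rule that samples arm $a$ at every round until $\hat p_s<p$ and $s\,d(\hat p_s\|p)>\log T$, and never afterwards. Every later round is then a coverage failure. By the standard asymptotics for crossing a square-root boundary, this happens at some $s\le T/2$ with probability of order $(\log T)^{3/2}/T$. For that rule the sum therefore grows like $(\log T)^{3/2}$, which also shows the paper's ``any adaptive sampling rule'' version is not true. Any correct proof must use how PrefUCB actually samples, or change the statement.

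Your fallback $\beta_t=\log t+3\log\log t$ is a valid way out. However, it changes the index in \eqref{eq:dir-ucb}, so it proves a statement about a different algorithm, not the lemma as stated.

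\textbf{A repair that keeps $\beta_t=\log t$.} Prove coverage with slack, which is all Theorem~\ref{thm:main} actually needs.
\begin{itemize}
\item For $x\le p-\epsilon<p$, the map $x\mapsto d(x\|p)-d(x\|p-\epsilon)$ is affine and decreasing, with value $d(p-\epsilon\|p)\ge 2\epsilon^2$ at $x=p-\epsilon$. Hence $d(x\|p)\ge d(x\|p-\epsilon)+2\epsilon^2$.
\item Plugging this into your Step 1 union over $s$ gives $\Pr[\UCB_k(a,t)<p-\epsilon]\le t^{-1}\sum_{s\ge 1}e^{-2s\epsilon^2}$.
\item Summing over $t$ gives $\sum_{t\le T}\Pr[\UCB_k(a,t)<p-\epsilon]\le (1+\log T)/(2\epsilon^2)$.
\end{itemize}
Now take $\epsilon=\Delta_a/4$, replace (G2) by $\UCB_k(a^\star,t)\ge S_k(a^\star)-\Delta_a/4$, tighten (G1) to $\Delta_a/4$, and set $m_0=\lceil 8\log T/\Delta_a^2\rceil$. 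With these changes the argument of Lemma~\ref{lem:selection} still goes through, and the final bound keeps the form $O(M\log T/\Delta_a^2)$.
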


\begin{proof}
Fix $(a,k)$ and write $p:=S_k(a)$. The observations
$\{Z_{a,k,s}\}_{s\ge 1}$ form an i.i.d.\ $[0,1]$-bounded stream with mean $p$, and the index $\UCB_k(a,t)$ is the usual KL-UCB index evaluated at the adaptively chosen sample count $n_t(a)$.
By the standard self-normalized coverage lemma for KL-UCB with bounded rewards~\citep{garivier2011kl}, for any adaptive sampling rule,
\[
\sum_{t=K+1}^{T}
\Pr\!\left[
\sup\Bigl\{q\in[\hat S_k(a,t),1]:
n_t(a)d(\hat S_k(a,t)\|q)\le \log t
\Bigr\}
<p
\right]
\le
C_{\mathrm{kl}}(1+\log T).
\]
We use this lemma only as a standard adaptive-sampling coverage result; the problem-specific step is the reduction of each preference direction to a bounded scalar feedback stream.
The event inside the probability is exactly $\{\UCB_k(a,t)<S_k(a)\}$ by definition, which proves the claim.
\end{proof}

\begin{myLemma}[Selection Elimination]
\label{lem:selection}
Fix any suboptimal action $a\notin P_C^*$. Let $a^\star\in\mathcal D(a)$ be a dominating Pareto-$C$-optimal action achieving the preference gap $\Delta_a$, namely,
\[
\Delta_a
=
\min_{k\in[M]}\bigl(S_k(a^\star)-S_k(a)\bigr).
\]
Define
\[
m_0 := \left\lceil \frac{2\log T}{\Delta_a^2}\right\rceil.
\]
Suppose that at round $t$ the following two events hold:
\begin{enumerate}
\item[(G1)] For all $k\in[M]$,
\[
\hat S_k(a,t) < S_k(a) + \frac{\Delta_a}{2};
\]
\item[(G2)] For all $k\in[M]$,
\[
\UCB_k(a^\star,t)\ge S_k(a^\star).
\]
\end{enumerate}
If, in addition, $n_t(a)\ge m_0$, then action $a$ cannot be selected by PrefUCB at round $t$.
\end{myLemma}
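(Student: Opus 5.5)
The plan is to show directly that, under (G1), (G2) and $n_t(a)\ge m_0$, the PrefUCB index satisfies $I_t(a) < I_t(a^\star)$. Since $a^\star \neq a$ is an available action, this strict inequality means $a\notin\arg\max_{b} I_t(b)$, so $a$ cannot be selected. The argument compares the two arms direction by direction and only then passes to the worst-direction aggregation.

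\emph{Step 1: upper bound on $a$'s directional indices.} Fix $k\in[M]$. By Lemma~\ref{lem:ucb-width},
\[
\UCB_k(a,t)\le \hat S_k(a,t)+\sqrt{\frac{\log t}{2\,n_t(a)}}.
\]
We treat the two terms separately.
\begin{itemize}
\item For the empirical mean, (G1) gives $\hat S_k(a,t) < S_k(a)+\Delta_a/2$.
\item For the width, we use $t\le T$ and $n_t(a)\ge m_0\ge 2\log T/\Delta_a^2$. Then
\[
\frac{\log t}{2\,n_t(a)} \le \frac{\log T}{2\cdot 2\log T/\Delta_a^2} = \frac{\Delta_a^2}{4},
\]
so the width is at most $\Delta_a/2$.
\end{itemize}
Combining the two bounds, $\UCB_k(a,t) < S_k(a)+\Delta_a$. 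By the choice of $a^\star$, we have $S_k(a^\star)-S_k(a)\ge \min_{k'}\bigl(S_{k'}(a^\star)-S_{k'}(a)\bigr)=\Delta_a$ for every $k$. Hence
\[
\UCB_k(a,t) < S_k(a^\star) \qquad \text{for all } k\in[M].
\]

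\emph{Step 2: lower bound on $a^\star$'s indices and aggregation.} By (G2), $S_k(a^\star)\le \UCB_k(a^\star,t)$ for every $k$. Together with Step 1 this gives the strict componentwise inequality $\UCB_k(a,t)<\UCB_k(a^\star,t)$ for all $k$.

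To pass to the minimum, choose $k^\star\in\arg\min_k \UCB_k(a^\star,t)$. Then
\[
I_t(a)=\min_k \UCB_k(a,t)\le \UCB_{k^\star}(a,t)<\UCB_{k^\star}(a^\star,t)=I_t(a^\star).
\]
This is exactly the monotonicity of the worst-direction score mentioned in Section~\ref{sec:method}. The strict inequality rules out $a$ as a maximizer regardless of the tie-breaking rule, which concludes the argument.

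\emph{Expected obstacle.} The delicate point is the calibration in Step 1: the margin $\Delta_a$ must be split exactly in half between the estimation error allowed by (G1) and the confidence width. This split relies on the normalization in $\Delta_a$ (Definition~\ref{def:gaps}) coinciding with the normalization of $S_k$, so that the gap is measured on the same $[0,1]$ scale as the KL-UCB indices. It also relies on bounding $\log t$ by $\log T$, which is valid because the lemma is only applied at rounds $t\le T$. Beyond checking these two points, the rest of the argument is mechanical.
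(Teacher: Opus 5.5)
Your proposal is correct and follows the paper's proof essentially step for step. You bound the width via Lemma~\ref{lem:ucb-width} and $n_t(a)\ge m_0$ by $\Delta_a/2$, (G1) supplies the other half of the margin, and the choice of $a^\star$ gives the strict bound in every one of the $M$ directions before aggregating. The only cosmetic difference is in the aggregation step: you compare directly with $a^\star$'s indices at its minimizing direction $k^\star$, whereas the paper routes both worst-direction scores through the intermediate quantity $\min_k S_k(a^\star)$, and the two are equivalent.
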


\begin{proof}
Assume that (G1) holds and that $n_t(a)\ge m_0$. By Lemma~\ref{lem:ucb-width}, for every $k\in[M]$,
\[
\UCB_k(a,t)\le \hat S_k(a,t) + \sqrt{\frac{\log t}{2\,n_t(a)}}.
\]
Since $t\le T$ and $n_t(a)\ge m_0$,
\[
\sqrt{\frac{\log t}{2\,n_t(a)}}
\le
\sqrt{\frac{\log T}{2\,m_0}}
\le
\frac{\Delta_a}{2}.
\]
Combining these inequalities with (G1) yields, for every $k\in[M]$,
\[
\UCB_k(a,t)
<
S_k(a) + \frac{\Delta_a}{2} + \frac{\Delta_a}{2}
=
S_k(a)+\Delta_a.
\]
Because $a^\star$ achieves the gap $\Delta_a$,
\[
\Delta_a
=
\min_{k\in[M]}\bigl(S_k(a^\star)-S_k(a)\bigr),
\]
and therefore, for every $k\in[M]$,
\[
S_k(a^\star)-S_k(a)\ge \Delta_a,
\qquad\text{equivalently}\qquad
S_k(a)+\Delta_a \le S_k(a^\star).
\]
Hence, for all $k\in[M]$,
\[
\UCB_k(a,t) < S_k(a^\star).
\]
This is a componentwise comparison before applying the worst-direction aggregation. Taking the minimum over $k$ gives
\[
I_t(a)=\min_{k\in[M]}\UCB_k(a,t)
<
\min_{k\in[M]} S_k(a^\star).
\]
On the other hand, under (G2),
\[
I_t(a^\star)
=
\min_{k\in[M]}\UCB_k(a^\star,t)
\ge
\min_{k\in[M]}S_k(a^\star).
\]
Thus the scalar comparison between $I_t(a)$ and $I_t(a^\star)$ follows from componentwise dominance across all preference directions.
Consequently,
\[
I_t(a)<I_t(a^\star),
\]
which implies that PrefUCB cannot select arm $a$ at round $t$.
\end{proof}

\subsection{Main Proof}

\begin{proof}[Proof of Theorem~\ref{thm:main}]
If $T\le K$, the algorithm performs only the initialization phase, so each arm is pulled at most once and the theorem is immediate. We therefore consider the case $T>K$.

Fix any suboptimal action $a\notin P_C^*$, and let $a^\star\in\mathcal D(a)$ and $m_0$ be as in Lemma~\ref{lem:selection}. We begin by bounding the expected number of pulls of arm $a$.

Since arm $a$ is pulled once during initialization, the number of rounds in which arm $a$ can be selected while its pre-round pull count is still smaller than $m_0$ is at most $m_0$. Therefore,
\begin{equation}
\label{eq:pull-decomp-app}
N_a(T)
\le
m_0 + \sum_{t=K+1}^{T} \mathbb{I}\!\left(a_t=a,\ n_t(a)\ge m_0\right).
\end{equation}
Taking expectations gives
\begin{equation}
\label{eq:pull-decomp-exp-app}
\mathbb E[N_a(T)]
\le
m_0 + \sum_{t=K+1}^{T}\Pr\!\left(a_t=a,\ n_t(a)\ge m_0\right).
\end{equation}

By Lemma~\ref{lem:selection}, on the event $\{a_t=a,\ n_t(a)\ge m_0\}$ at least one of the good events (G1) or (G2) must fail. Define the corresponding bad events
\begin{align*}
B_1(t)
&:=
\Bigl\{\exists\,k\in[M]:
\hat S_k(a,t)\ge S_k(a)+\tfrac{\Delta_a}{2}\Bigr\},\\
B_2(t)
&:=
\Bigl\{\exists\,k\in[M]:
\UCB_k(a^\star,t)<S_k(a^\star)\Bigr\}.
\end{align*}
Then
\[
\{a_t=a,\ n_t(a)\ge m_0\}
\subseteq
B_1(t)\cup B_2(t).
\]
Substituting this into~\eqref{eq:pull-decomp-exp-app}, we obtain
\begin{equation}
\label{eq:union-split-app}
\mathbb E[N_a(T)]
\le
m_0
+
\sum_{t=K+1}^{T}\Pr\!\left(a_t=a,\ n_t(a)\ge m_0,\ B_1(t)\right)
+
\sum_{t=K+1}^{T}\Pr\!\left(B_2(t)\right).
\end{equation}

\medskip
\noindent\textbf{Step 1: Bounding the $B_1$ contribution.}
For $m\ge 1$, write
\[
\bar S_{k,m}(a)
:=
\frac{1}{m}\sum_{s=1}^{m} Z_{a,k,s}.
\]
If arm $a$ is selected at a round with pre-round count $n_t(a)=m$, then $\hat S_k(a,t)=\bar S_{k,m}(a)$. Moreover, for each fixed $m$, arm $a$ can be selected with pre-round count $m$ at most once. Therefore,
\[
\sum_{t=K+1}^{T}
\mathbb{I}\!\left(a_t=a,\ n_t(a)\ge m_0,\ B_1(t)\right)
\le
\sum_{m=m_0}^{T}
\mathbb{I}\!\left\{
\exists\,k\in[M]:
\bar S_{k,m}(a)\ge S_k(a)+\frac{\Delta_a}{2}
\right\}.
\]
Taking expectations and applying a union bound over directions gives
\begin{align}
\sum_{t=K+1}^{T}
\Pr\!\left(a_t=a,\ n_t(a)\ge m_0,\ B_1(t)\right)
&\le
\sum_{m=m_0}^{T}
\sum_{k=1}^{M}
\Pr\!\left(
\bar S_{k,m}(a)\ge S_k(a)+\frac{\Delta_a}{2}
\right) \notag\\
&\le
M\sum_{m=m_0}^{T}
\exp\!\left(-\frac{m\Delta_a^2}{2}\right).
\label{eq:B1-sample-count-app}
\end{align}
Here $S_k(a)+\Delta_a/2\le 1$ for every $k$, because the choice of $a^\star$ gives $S_k(a)+\Delta_a\le S_k(a^\star)\le 1$. The last step uses the Chernoff--KL bound~\eqref{eq:chernoff-upper-app} and Pinsker's inequality~\eqref{eq:pinsker-app}, since
\[
d\!\left(S_k(a)+\tfrac{\Delta_a}{2}\,\middle\|\,S_k(a)\right)
\ge
\frac{\Delta_a^2}{2}
\]
for every $k\in[M]$.
Since $\Delta_a\in(0,1]$, we have
$1-\exp(-\Delta_a^2/2)\ge \Delta_a^2/4$. Therefore,
\[
\sum_{m=m_0}^{T}
\exp\!\left(-\frac{m\Delta_a^2}{2}\right)
\le
\frac{\exp(-m_0\Delta_a^2/2)}
{1-\exp(-\Delta_a^2/2)}
\le
\frac{4}{T\Delta_a^2},
\]
where the last step uses $m_0\ge 2\log T/\Delta_a^2$. Thus,
\begin{equation}
\label{eq:B1-total-app}
\sum_{t=K+1}^{T}
\Pr\!\left(a_t=a,\ n_t(a)\ge m_0,\ B_1(t)\right)
\le
\frac{4M}{T\Delta_a^2}
\le
\frac{4M}{\Delta_a^2}.
\end{equation}

\medskip
\noindent\textbf{Step 2: Bounding the $B_2$ contribution.}
Lemma~\ref{lem:ucb-coverage} and a union bound over directions imply
\begin{equation}
\label{eq:B2-total-app}
\sum_{t=K+1}^{T}\Pr\!\left(B_2(t)\right)
\le
M C_{\mathrm{kl}}(1+\log T).
\end{equation}

\medskip
\noindent\textbf{Step 3: Combining the bounds.}
Combining~\eqref{eq:union-split-app}, \eqref{eq:B1-total-app}, and \eqref{eq:B2-total-app}, we obtain
\[
\mathbb E[N_a(T)]
\le
m_0 + \frac{4M}{\Delta_a^2} + M C_{\mathrm{kl}}(1+\log T).
\]
Since
\[
m_0=\left\lceil \frac{2\log T}{\Delta_a^2}\right\rceil
\le
\frac{2\log T}{\Delta_a^2}+1,
\]
and since $\Delta_a\in(0,1]$ and $\log T\ge 1$ for $T\ge 3$, the last display implies that there exists a universal constant $C_0>0$ such that
\begin{equation}
\label{eq:na-final-app}
\mathbb E[N_a(T)]
\le
C_0\,\frac{(M+1)\log T}{\Delta_a^2}+1.
\end{equation}
For the remaining finite cases $K<T<3$, the bound~\eqref{eq:na-final-app} is trivially valid after increasing $C_0$ if necessary.

Finally, using
\[
\mathcal R_T^{01}
=
\sum_{a\notin P_C^*} N_a(T),
\]
taking expectations and summing~\eqref{eq:na-final-app} over all suboptimal actions yields
\[
\mathbb E[\mathcal R_T^{01}]
\le
\sum_{a\notin P_C^*}
\left(
C_0\,\frac{(M+1)\log T}{\Delta_a^2}+1
\right),
\]
which proves Theorem~\ref{thm:main}.
\end{proof}

\begin{proof}[Proof of Corollary~\ref{cor:gap-regret}]
By definition,
\[
\mathcal R_T^{\mathrm{gap}}
=
\sum_{a\notin P_C^*}\Delta_a\,N_a(T).
\]
Taking expectations and applying the arm-wise pull-count bound~\eqref{eq:na-final-app}, we obtain
\begin{align*}
\mathbb E[\mathcal R_T^{\mathrm{gap}}]
&=
\sum_{a\notin P_C^*}\Delta_a\,\mathbb E[N_a(T)]\\
&\le
\sum_{a\notin P_C^*}
\Delta_a
\left(
C_0\,\frac{(M+1)\log T}{\Delta_a^2}+1
\right)\\
&=
\sum_{a\notin P_C^*}
\left(
C_0\,\frac{(M+1)\log T}{\Delta_a}
+\Delta_a
\right),
\end{align*}
which proves Corollary~\ref{cor:gap-regret}.
\end{proof}

The proof above is intentionally conservative in order to match the theorem statement in the main text. The linear dependence on $M$ arises from union bounds over the $M$ preference directions, reflecting the fact that PrefUCB evaluates each arm through the worst-direction score $\min_{k\in[M]}\UCB_k(a,t)$. Sharper leading constants are possible with a more refined treatment of the coverage events, but such refinements are not needed for the stated result.

\section{Experimental Details and Additional Results}
\label{app:mandate_details}

This appendix provides a comprehensive description of the preference construction methodology, the specific logic defining each financial mandate, and detailed performance visualizations.

\begin{table*}[!h]
\centering
\resizebox{.95\textwidth}{!}{
\begin{tabularx}{\textwidth}{@{}c|>{\centering\arraybackslash}X|c@{}}
\toprule
\hline
\textbf{Category} & \textbf{Scenario and Description} & \textbf{Preference Matrix} \(\mathbf{W}\) \\
\hline
\multirow{3}{*}{\textbf{Standard Benchmarks}} 
& \textbf{Standard}: Standard Pareto optimality benchmark & \multirow{-1}{*}{\(\mathbf{I}_7\)} \\
& \textbf{Scalarization}: Simple weighted combination & \multirow{-1}{*}{\([.2,.1,.2,.2,.1,.1,.1]\)} \\
& \textbf{Single-objective}: Classical MAB reduction & \multirow{-1}{*}{\([1,0,\dots,0]\)} \\
\hline
\multirow{12}{*}{\textbf{Real-world Applications}} 
& \textbf{Customer Service}: Major preference on multitasking, maintain acceptable accuracy and reasoning, sacrifice advanced problem-solving for more multitasking ability & \multirow{3}{*}{\(\begin{bmatrix} 1 & 0 & 1 & 0 & 0 & 2 & 1 \\ 0 & -1 & 0 & 0 & 0 & 1 & 0 \end{bmatrix}\)} \\
& \textbf{Legal Document Review}: Major preference on reasoning, maintain acceptable advanced problem-solving and domain knowledge, sacrifice multitasking for more reasoning & \multirow{3}{*}{\(\begin{bmatrix} 0 & 1 & 2 & 1 & 0 & 0 & 0 \\ 0 & 0 & 1 & 0 & 0 & -1 & 0 \end{bmatrix}\)} \\
& \textbf{High-frequency Trade}: Major preference on processing speed, maintain acceptable domain-knowledge, sacrifice advanced problem-solving for more speed & \multirow{3}{*}{\(\begin{bmatrix} 0 & 0 & 0 & 1 & 0 & 0 & 2 \\ 0 & -1 & 0 & 0 & 0 & 0 & 1 \end{bmatrix}\)} \\
& \textbf{Scientific Discovery}: Major preference on advanced problem-solving, maintain acceptable reasoning, domain knowledge and consistency, sacrifice speed for more advanced problem-solving ability & \multirow{4}{*}{\(\begin{bmatrix} 0 & 2 & 1 & 1 & 1 & 0 & 0 \\ 0 & 1 & 0 & 0 & 0 & 0 & -1 \end{bmatrix}\)} \\
\hline
\bottomrule
\end{tabularx}
}
\caption{Preference direction matrices for various scenarios. The sub-objectives considered are accuracy, advanced problem-solving capability, reasoning, domain knowledge, interaction consistency, multitasking ability, and processing speed. Higher numerical values indicate stronger preferences, while negative values signify sub-objectives that can be relaxed.}
\label{tab:preferences}
\end{table*}

\begin{table*}[!h]
\centering
\caption{Financial Mandates and Preference Logic. The weight vectors ($W$) are constructed to enforce specific trade-offs, such as penalizing volatility (negative weights) while maximizing Alpha (positive weights).}
\label{tab:mandate_descriptions}
\small
\resizebox{.95\textwidth}{!}{
\begin{tabular}{l p{5cm} p{8cm}}
\toprule
\textbf{Mandate} & \textbf{Theoretical Goal} & \textbf{Preference Construction Logic (Excerpt)} \\
\midrule

\textbf{Aggressive Growth} & Absolute Return + Alpha + Risk Tolerance & Maximize CWG ($w \approx 0.63$) and Alpha ($w \approx 0.37$). Negative penalties applied to MDD and TO are relaxed or removed to allow aggressive positioning. \\
\midrule

\textbf{Market Neutral} & Alpha + Calmar + Downside Protection & Primary focus on Alpha ($w=1.0$) and CR. Strong negative penalties on correlation and drawdown to ensure neutrality. \\
\midrule

\textbf{High Frequency Trading} & Sharpe + Sortino + High Turnover Tolerance & Balanced emphasis on SR ($w \approx 0.50$) and Sortino ($w \approx 0.50$). Turnover penalties are removed to accommodate high-frequency rebalancing. \\
\midrule

\textbf{Income Focused} & Consistent Yield + Low Volatility & Balanced weights on CWG ($\approx 0.60$) and SR ($\approx 0.40$). Negative weights on Turnover and Alpha to discourage speculative trading. \\
\midrule

\textbf{Low Volatility} & Double Risk Control (MDD + Sortino) & Pure risk minimization: Weights distributed between MDD ($w \approx 0.72$) and Sortino ($w \approx 0.28$). Strategies with high volatility receive severe negative scores. \\
\midrule

\textbf{Momentum Conc.} & Alpha + CWG Maximization & Similar to Aggressive Growth but with higher specific weight on Alpha ($\approx 0.63$) to capture trend-following premiums. \\
\midrule

\textbf{Risk Parity Enh.} & SR + MDD Balance & Weights distributed between SR ($\approx 0.29$) and MDD ($\approx 0.71$) to enforce risk-adjusted returns over pure growth. \\
\midrule

\textbf{Balanced Growth} & Multi-Objective Equilibrium & A complex mix of CWG, SR, and Sortino, ensuring no single metric dominates. Penalties are moderate. \\

\bottomrule
\end{tabular}
}
\end{table*}

\begin{table*}[!h]
\caption{Detailed Weight Specifications for Financial Mandates. Each mandate is represented by three weight vectors $w_1, w_2, w_3 \in \mathbb{R}^7$, corresponding to the seven objectives: CWG, SR, MDD, Sortino, Alpha, TO, CR.}
\label{tab:weight_specifications}
\centering
\footnotesize
\begin{tabular}{lcccccccc}
\toprule
\textbf{Mandate} & \textbf{Vector} & \textbf{CWG} & \textbf{SR} & \textbf{MDD} & \textbf{Sortino} & \textbf{Alpha} & \textbf{TO} & \textbf{CR} \\
\midrule
\multirow{3}{*}{\textbf{Aggressive Growth}}
& $w_1$ & 0.633 & 0.000 & -0.300 & 0.000 & 0.367 & 0.000 & 0.000 \\
& $w_2$ & 1.000 & 0.000 & 0.000 & -0.300 & 0.000 & -0.300 & 0.000 \\
& $w_3$ & 0.000 & 0.000 & -0.300 & 0.000 & 0.475 & -0.300 & 0.525 \\
\midrule
\multirow{3}{*}{\textbf{Market Neutral Hedge}}
& $w_1$ & 0.000 & 0.000 & 0.000 & 0.000 & 1.000 & 0.000 & 0.000 \\
& $w_2$ & -0.300 & 0.000 & 0.000 & 0.000 & 0.475 & 0.000 & 0.525 \\
& $w_3$ & 0.000 & 0.000 & 0.716 & 0.284 & 0.000 & -0.300 & 0.000 \\
\midrule
\multirow{3}{*}{\textbf{High Frequency Trading}}
& $w_1$ & 0.000 & 0.505 & 0.000 & 0.495 & 0.000 & 0.000 & 0.000 \\
& $w_2$ & 0.000 & 1.000 & -0.300 & 0.000 & -0.300 & 0.000 & 0.000 \\
& $w_3$ & 0.000 & 0.000 & 0.000 & 0.510 & 0.000 & -0.300 & 0.490 \\
\midrule
\multirow{3}{*}{\textbf{Income Focused}}
& $w_1$ & 0.595 & 0.405 & -0.300 & 0.000 & 0.000 & 0.000 & 0.000 \\
& $w_2$ & 0.000 & 0.000 & 0.000 & 1.000 & -0.300 & -0.300 & 0.000 \\
& $w_3$ & 0.000 & 0.515 & 0.000 & 0.000 & 0.000 & -0.300 & 0.485 \\
\midrule
\multirow{3}{*}{\textbf{Low Volatility}}
& $w_1$ & 0.000 & 0.000 & 0.716 & 0.284 & 0.000 & 0.000 & 0.000 \\
& $w_2$ & -0.300 & 0.000 & 0.000 & 1.000 & -0.300 & -0.300 & 0.000 \\
& $w_3$ & 0.000 & 0.287 & 0.713 & 0.000 & 0.000 & 0.000 & -0.300 \\
\midrule
\multirow{3}{*}{\textbf{Momentum Concentrated}}
& $w_1$ & 0.633 & 0.000 & 0.000 & 0.000 & 0.367 & 0.000 & 0.000 \\
& $w_2$ & 0.000 & 0.000 & -0.300 & -0.300 & 1.000 & -0.300 & 0.000 \\
& $w_3$ & 0.609 & -0.300 & 0.000 & 0.000 & 0.000 & -0.300 & 0.391 \\
\midrule
\multirow{3}{*}{\textbf{Risk Parity Enhanced}}
& $w_1$ & 0.000 & 0.287 & 0.713 & 0.000 & 0.000 & 0.000 & 0.000 \\
& $w_2$ & 0.000 & 1.000 & 0.000 & 0.000 & 0.000 & -0.300 & 0.000 \\
& $w_3$ & -0.300 & 0.000 & 0.716 & 0.284 & -0.300 & 0.000 & 0.000 \\
\midrule
\multirow{3}{*}{\textbf{Balanced Growth}}
& $w_1$ & 0.426 & 0.290 & 0.000 & 0.284 & 0.000 & 0.000 & 0.000 \\
& $w_2$ & 0.000 & 0.515 & -0.300 & 0.000 & 0.000 & -0.300 & 0.485 \\
& $w_3$ & 0.000 & 0.000 & 0.000 & 0.536 & 0.464 & -0.300 & 0.000 \\
\bottomrule
\end{tabular}
\end{table*}

\begin{figure}[!t]
\centering

\begin{subfigure}{0.95\textwidth}
  \centering
  \includegraphics[width=\linewidth]{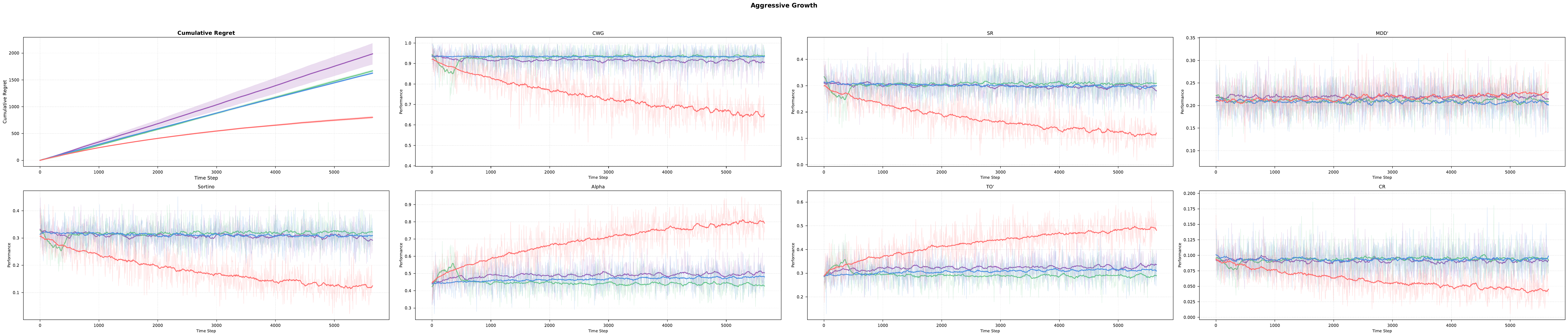}
  \caption{Aggressive Growth}
\end{subfigure}
\hfill
\begin{subfigure}{0.95\textwidth}
  \centering
  \includegraphics[width=\linewidth]{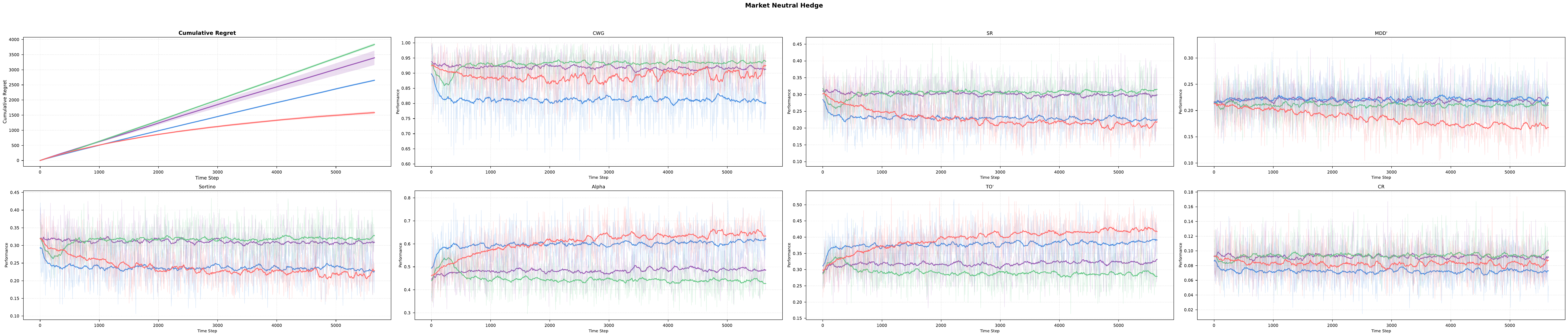}
  \caption{Market Neutral Hedge}
\end{subfigure}

\begin{subfigure}{0.95\textwidth}
  \centering
  \includegraphics[width=\linewidth]{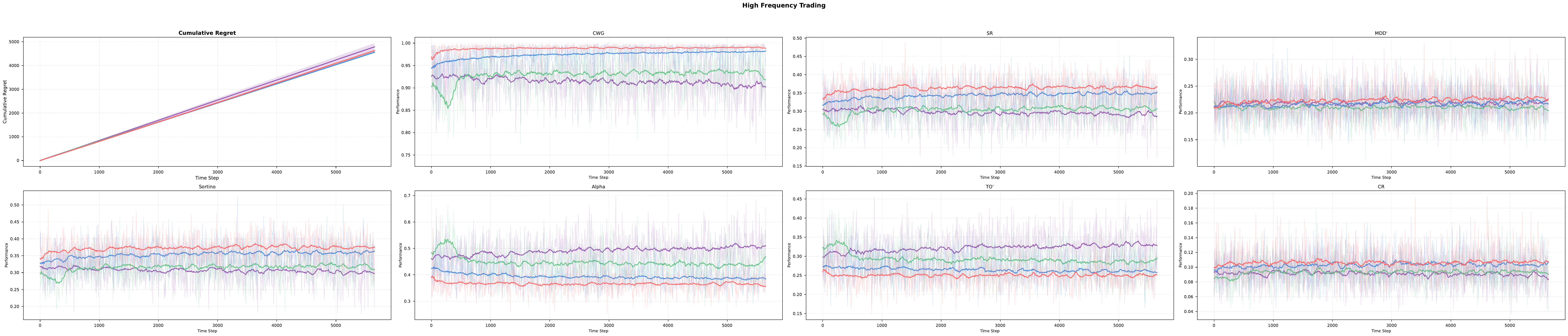}
  \caption{High Frequency Trading}
\end{subfigure}
\hfill
\begin{subfigure}{0.95\textwidth}
  \centering
  \includegraphics[width=\linewidth]{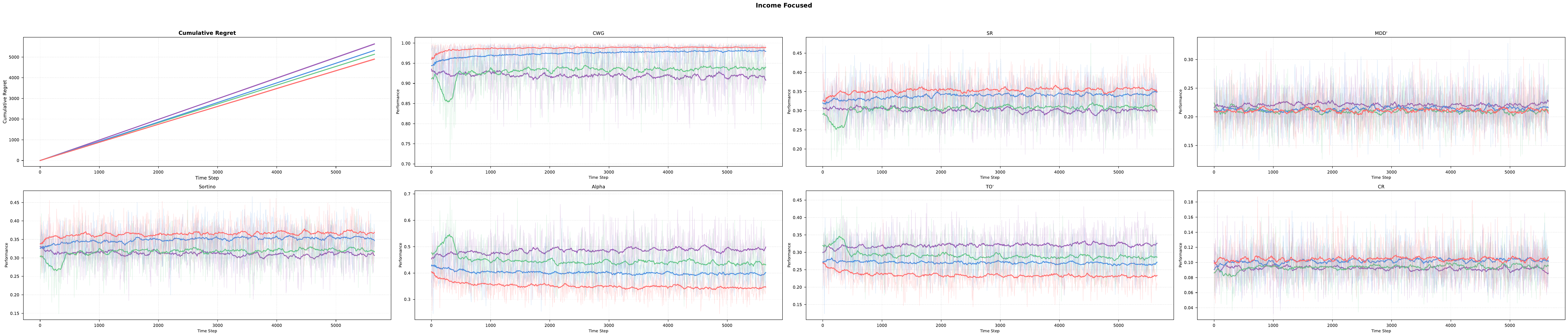}
  \caption{Income Focused performance}
\end{subfigure}
\begin{subfigure}{0.75\textwidth}
  \centering
  \includegraphics[width=\linewidth]{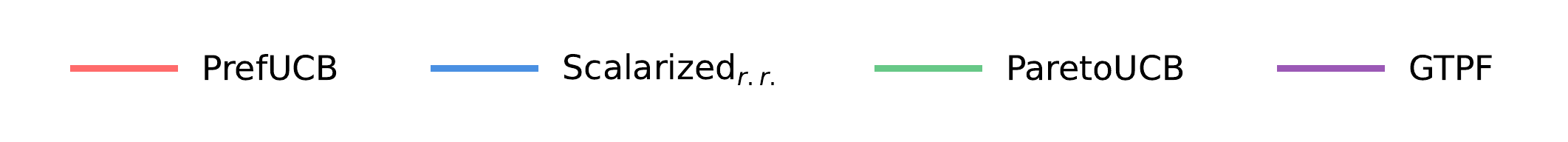}
\end{subfigure}

\caption{\textbf{Temporal trends of sub-objectives across different mandate scenarios.}}
\label{fig:sub-ops-a}
\end{figure}

\begin{figure}[!t]
\centering
\begin{subfigure}{0.95\textwidth}
  \centering
  \includegraphics[width=\linewidth]{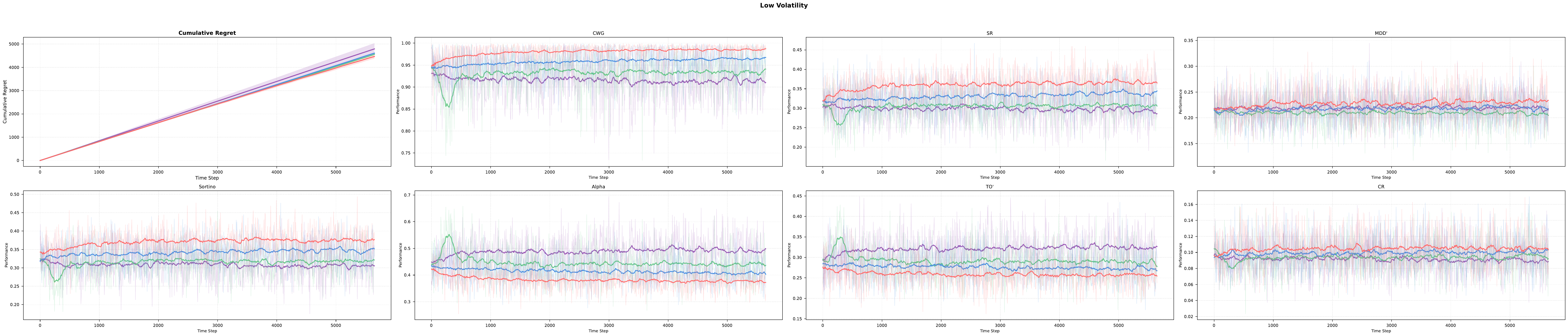}
  \caption{Low Volatility}
\end{subfigure}
\begin{subfigure}{0.95\textwidth}
  \centering
  \includegraphics[width=\linewidth]{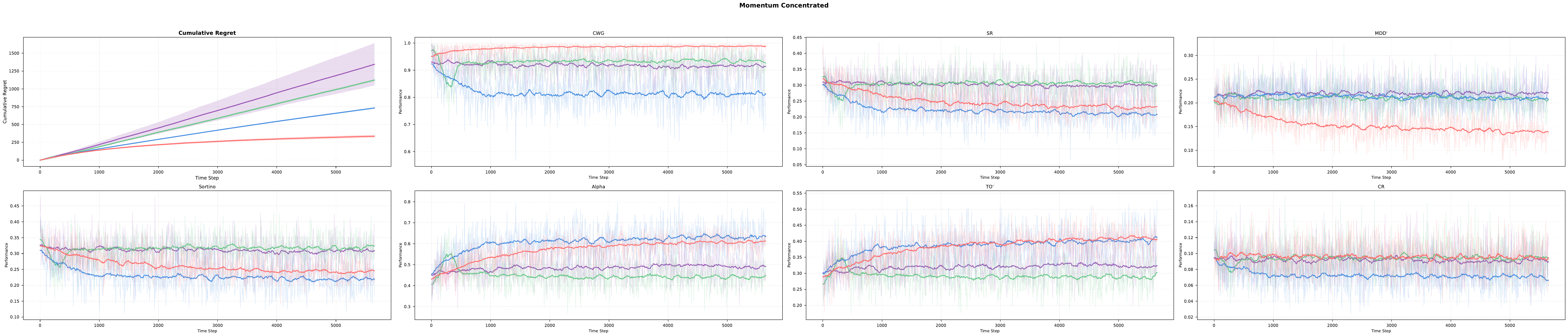}
  \caption{Momentum Concentrated}
\end{subfigure}
\begin{subfigure}{0.95\textwidth}
  \centering
  \includegraphics[width=\linewidth]{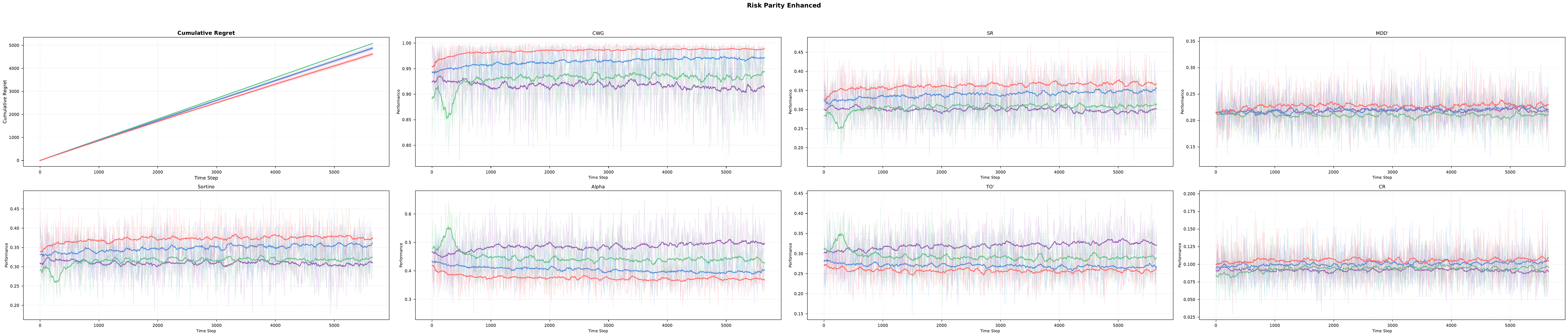}
  \caption{Risk Parity Enhanced}
\end{subfigure}
\begin{subfigure}{0.95\textwidth}
  \centering
  \includegraphics[width=\linewidth]{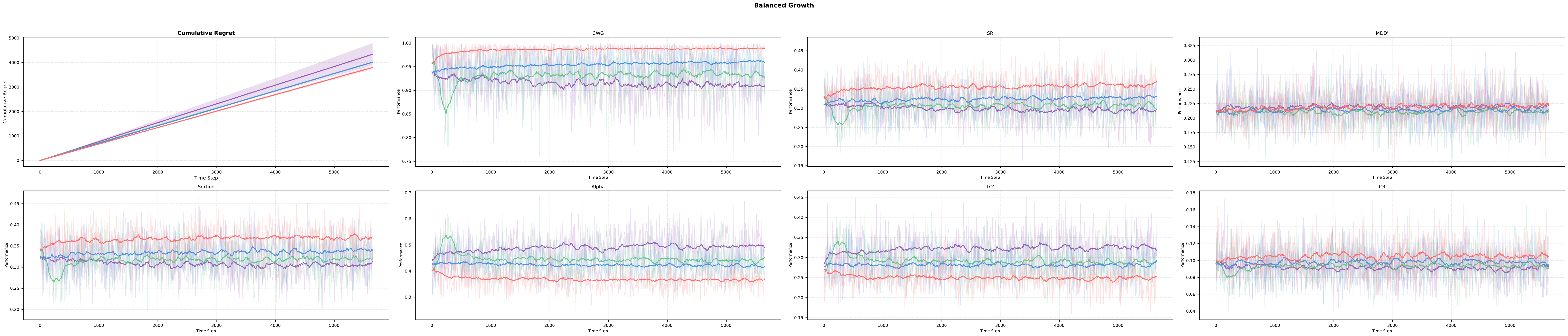}
  \caption{Balanced Growth}
\end{subfigure}
\begin{subfigure}{0.75\textwidth}
  \centering
  \includegraphics[width=\linewidth]{./fig/shared_algorithm_legend.pdf}
\end{subfigure}

\caption{\textbf{Temporal trends of sub-objectives across different mandate scenarios.}}
\label{fig:sub-ops-b}
\end{figure}

\subsection{Contenders and Implementation Details}
\label{app:contenders}

We compare the proposed PrefUCB algorithm with:
\begin{compactitem}[$\bullet$]
\item \textbf{ParetoUCB}~\citep{DBLP:conf/ijcnn/DruganN13}. As there are currently no algorithms explicitly designed for minimizing preference-directed regret, we adapt this state-of-the-art multi-objective bandit algorithms into selective ensemble methods, which selects base models according to standard Pareto optimality definition, without considering the preference direction;
\item \textbf{ScalarizedUCB\(_{r.r.}\)}~\citep{DBLP:conf/ijcnn/DruganN13}, which first converts multi-objective feedback into scalar rewards by randomly selecting a row from the preference direction matrix as the scalarization vector, which is taken as input of the selection algorithm;
\item \textbf{GT-PF} (\underline{g}round \underline{t}ruth \underline{P}areto \underline{f}ront). Given that existing selective ensemble methods lack mechanisms for incorporating user-specific preferences, GT-PF serves as an oracle-based benchmark that conducts random selection from the standard Pareto-optimal set, assuming complete knowledge of the true Pareto-optimal set, and thus providing the best possible performance for the preference-agnostic multi-objective selective ensemble methods~\citep{DBLP:conf/aaai/QianYZ15,DBLP:conf/ppsn/WuHQZ22} under perfect information assumptions.
\end{compactitem}

\subsection{Preference Direction Details}
\label{app:preference-direction-matrices}
Here, we show the preference direction details of some potential scenarios for LLM in the following Table~\ref{tab:preferences}.

\subsection{Data-Driven Preference Construction}

To evaluate our approach in realistic environments, we employ a rigorous data-driven heuristic to construct the preference matrix $W$ for each mandate, utilizing the historical performance of base strategies on the training partition.

\textbf{Construction Methodology.} The preference matrix $W$ is derived through a three-step procedure:
\begin{enumerate}
    \item \textbf{Metric Normalization:} All seven objective metrics are normalized to the unit interval $[0, 1]$. Objectives requiring minimization (MDD, TO) are inverted to create a unified maximization framework.
    \item \textbf{Variance-Based Priority:} For primary objectives, we assign positive weights proportional to their historical variance. This emphasizes metrics with high distinguishability.
    \item \textbf{Regression-Based Penalties:} We model trade-offs between conflicting objectives using linear regression. Negative weights are assigned to secondary objectives based on the regression coefficients ($\beta$) to penalize strategies that violate the mandate's risk tolerance.
\end{enumerate}

Each mandate is defined by a preference matrix $W \in \mathbb{R}^{3 \times 7}$, where each row $w_i$ represents an alternative weighting scheme. The algorithm evaluates strategies against all three weight vectors to capture the multi-faceted nature of institutional mandates.

\subsection{Mandate Descriptions and Weight Specifications}

We constructed $W$ for eight specific mandates. Table~\ref{tab:mandate_descriptions} details the theoretical goals and construction logic for each mandate, while Table~\ref{tab:weight_specifications} provides the exact weight vectors used in the experiments. Positive values indicate objectives to maximize, whereas negative values denote penalty terms for undesirable characteristics.

\subsection{Temporal Dynamics of Objectives}
\label{sub:temporal_dynamics_of_objectives}
Figures~\ref{fig:sub-ops-a} and~\ref{fig:sub-ops-b} illustrate the temporal trends of sub-objectives across different mandates. The solid lines represent the 100-round moving average, while the shaded regions correspond to the true reward values. The results confirm that \texttt{PrefUCB} (shown in red) consistently prioritizes preferred objectives, compromises on relaxable ones, and maintains acceptable levels for moderate objectives, demonstrating superior trade-off capabilities.